\documentclass[11pt, a4paper, logo, copyright]{googledeepmind}
\usepackage[authoryear, sort&compress, round]{natbib}
\usepackage{multicol}
\usepackage{multirow}
\usepackage{subcaption}
\usepackage{tablefootnote}
\usepackage{algorithmic}
\usepackage{listings}
\usepackage{pifont}
\usepackage{amsmath, amssymb}
\usepackage{mdframed} % For framing quotes and logs
\usepackage{enumitem} % For better list formatting

\usepackage{hyperref}
\usepackage{tabularx}
\usepackage{booktabs}
\usepackage[table]{xcolor}
\usepackage{ragged2e}
\usepackage{float}

\theoremstyle{plain}
\newtheorem{theorem}{Theorem}
\newtheorem{lemma}{Lemma}

\theoremstyle{definition}

\newtheorem{remark}{Remark}[section]
\newtheorem{definition}[theorem]{Definition}

\definecolor{humanbubble}{RGB}{235, 245, 255}
\definecolor{aibubble}{RGB}{245, 245, 245}
\definecolor{loggray}{RGB}{230, 230, 230}
\definecolor{indexbg}{RGB}{248, 248, 248}

\usepackage[most]{tcolorbox}

\newtcolorbox{interactionlog}[2][]{
  enhanced,
  arc=0pt, outer arc=0pt,
  colback=white, colframe=black!60,
  boxrule=0.8pt,
  fonttitle=\bfseries\sffamily, coltitle=black, colbacktitle=loggray,
  title={Human-AI Interaction Card \if\relax\detokenize{#1}\relax\else for #1\fi},
  halign title=center, attach title to upper,
  after title={\vspace{4pt}\hrule\vspace{10pt}},
  lower separated=true,
  segmentation style={solid, black!60, line width=0.8pt},
  colbacklower=indexbg,
  after upper={\par\vfill
    \begin{tcolorbox}[
      enhanced, colback=indexbg, colframe=white, boxrule=0pt,
      top=0pt, bottom=0pt, fontupper=\footnotesize\sffamily,
      title=Raw prompts and outputs, coltitle=black!70, attach title to upper,
      after title={:\enskip}, sharp corners
    ]
    #2 
    \end{tcolorbox}
  }
}

\newcommand{\human}[1]{%
  \noindent\begin{flushright}
    \begin{minipage}[c]{0.70\textwidth}
      \begin{tcolorbox}[
        enhanced,
        colback=humanbubble, colframe=black!15,
        arc=6pt, sharp corners=southeast, boxrule=0.5pt,
        left=6pt, right=6pt, top=4pt, bottom=4pt, boxsep=0pt
      ]\small #1\end{tcolorbox}
    \end{minipage}%
    \hspace{8pt}
    \begin{minipage}[c]{40pt}
      \footnotesize\sffamily\textbf{Human}
    \end{minipage}
  \end{flushright}
  \vspace{-12pt}
}

\newcommand{\ai}[2]{%
  \noindent\begin{flushleft}
    \begin{minipage}[c]{55pt}
      \footnotesize\sffamily\textbf{#1}
    \end{minipage}%
    \hspace{2pt}
    \begin{minipage}[c]{0.65\textwidth}
      \begin{tcolorbox}[
        enhanced,
        colback=aibubble, colframe=black!15,
        arc=6pt, sharp corners=southwest, boxrule=0.5pt,
        left=6pt, right=6pt, top=4pt, bottom=4pt, boxsep=0pt
      ]\small #2\end{tcolorbox}
    \end{minipage}
  \end{flushleft}
  \vspace{-12pt}
}

\definecolor{boxblue}{RGB}{0, 0, 150}
\definecolor{boxback}{RGB}{245, 245, 255}

\newtcolorbox{problem}[1]{%
    colback=boxback,
    colframe=boxblue,
    fonttitle=\bfseries\large,
    title={#1},
    sharp corners,
    enhanced,
    attach boxed title to top left={yshift=-2mm, xshift=2mm},
    boxed title style={colframe=boxblue, colback=boxblue},
    before skip=15pt plus 2pt,
    after skip=15pt plus 2pt,
    top=10pt, bottom=10pt, left=10pt, right=10pt
}

\newtcolorbox{solution}[1]{%
    colback=white,
    colframe=boxblue,
    fonttitle=\bfseries\large,
    title={#1},
    sharp corners,
    enhanced jigsaw, % Better frame handling for page breaks than just 'enhanced'
    breakable,       % <--- Allows the box to split across pages
    attach boxed title to top left={yshift=-2mm, xshift=2mm},
    boxed title style={colframe=boxblue, colback=boxblue},
    before skip=15pt plus 2pt,
    after skip=15pt plus 2pt,
    top=10pt, bottom=10pt, left=10pt, right=10pt
}

\definecolor{tablegray}{HTML}{EFEFEF}

\newcommand{\aletheia}{\emph{Aletheia}}
\newcommand{\fp}{\emph{FirstProof}}
\newcommand{\agenta}{\emph{Aletheia} A}
\newcommand{\agentb}{\emph{Aletheia} B}

\DeclareMathOperator{\vecop}{vec}

\DeclareMathAlphabet{\catsymbfont}{U}{rsfs}{m}{n}

\newcommand{\bR}{\mathbb{R}}

\newcommand{\aO}{{\catsymbfont{O}}}

\title{\aletheia{} tackles \fp{} autonomously}
\author{
 Tony Feng\textsuperscript{*},
 Junehyuk Jung,
 Sang-hyun Kim,
 Carlo Pagano,
 Sergei Gukov,
 Cheng-Chiang Tsai,
 David P. Woodruff,
 Adel Javanmard, 
 Aryan Mokhtari, 
 Dawsen Hwang, 
 Yuri Chervonyi,
 Jonathan N. Lee, 
 Garrett Bingham,
 Trieu H. Trinh,
 Vahab Mirrokni, 
 Quoc V. Le,
 Thang Luong\textsuperscript{*}
\\
 \textsuperscript{*}Project leads.
 Work conducted under Google DeepMind. 
}
\correspondingauthor{fengtony@google.com, thangluong@google.com. \newline External affiliations: UC Berkeley (Tony Feng), Brown University (Junehyuk Jung), Korea Institute for Advanced Study (Sang-hyun Kim), Concordia University (Carlo Pagano), Caltech (Sergei Gukov), Academia Sinica (Cheng-Chiang Tsai), CMU (David Woodruff), USC (Adel Javanmard), UT Austin (Aryan Mokhtari).}
 
\begin{abstract}
We report the performance of \aletheia{} \citep{feng2026autonomousmathematicsresearch}, a mathematics research agent powered by Gemini 3 Deep Think, on the inaugural \fp{} challenge.
Within the allowed timeframe of the challenge, \aletheia{} autonomously solved 6 problems (2, 5, 7, 8, 9, 10) out of 10 according to 
majority expert assessments; we note that experts were not unanimous on Problem 8 (only). For full transparency, we explain our interpretation of \fp{} and disclose details about our experiments as well as our evaluation.
Raw prompts and outputs are available at \url{https://github.com/google-deepmind/superhuman/tree/main/aletheia}.
\end{abstract}

\begin{document}

\maketitle

\section{Introduction}
\fp{} \citep{abouzaid2026proof} is a collection of ten research-level math problems that arose naturally in the work of professional mathematicians, and was proposed as an assessment of current AI capabilities. These problems are described by the \fp{} authors as being ``Lemmas'', meaning intermediate technical statements rather than open problems of interest for their own sake\footnote{At least one (Problem 7) had also been previously described as an open problem of interest \cite{Wein23}.}. % All of these problems had already been solved by mathematicians, but the solutions did not appear online\footnote{A sketch of the solution to Problem 1 could be found online.}. 
The problems were released on February 5, 2026 and given a deadline of 11:59pm PST on February 13, 2026, at which point official (human-written) solutions were published.

This report documents the performance of \aletheia{} \citep{feng2026autonomousmathematicsresearch}, a mathematics research agent powered by Gemini 3 Deep Think~\citep{gemini3deepthink2026},  on \fp{}. 
 The results of \aletheia{} on a best-of-2 run per problem are displayed in Table \ref{table:intro-results}.

\begin{table}[H]
\centering
\renewcommand{\arraystretch}{1.2} 
\begin{tabular}{|l|c|c|}
\hline
    & \shortstack{\aletheia{} \\ (best of 2)}  & \shortstack{Expert Evaluation \\ (correct/total)}\\ \hline
P1  & No Output  & \\ \hline
P2  & \color{green} Correct & 4/4 \\ \hline
P3  & No Output &  \\ \hline
P4  & No Output & \\ \hline
P5  & \color{green} Correct & 4/4 \\ \hline
P6  & No Output & \\ \hline
P7  & \color{green} Correct &  3/3\\ \hline
P8  & \color{green!70} Correct? & 5/7\\ \hline
P9  & \color{green} Correct & 4/4  \\  \hline
P10 & \color{green} Correct & 2/2 \\   \hline
\end{tabular}
\caption{Summary of \aletheia{}'s performance on \fp{}. The Expert Evaluation column displays the number of experts who rated the solution as being Correct, out of the total number of experts consulted. Only the assessment on P8 was not unanimous.}
\label{table:intro-results}
\end{table}

 We emphasize that \emph{this is a limited study conducted by the team behind the \aletheia{} agent, with help on evaluations from other experts within Google; it is not representative of Google's collective efforts on \fp{}}.

Continuing our practice of transparency in AI for mathematical and scientific discovery \citep{discovery26} and the concept of Human-AI Interaction (HAI) card introduced in \citep{feng2026autonomousmathematicsresearch}, we provide below the HAI card for how we have obtained solutions to \fp{}.
\begin{interactionlog}[]{https://github.com/google-deepmind/superhuman/tree/main/aletheia}

    \human{Prompt (problems copy-pasted verbatim from \fp{} .tex file)}

    \ai{\aletheia}{Response}

    \human{Verification and extraction prompt (\S \ref{sec:extraction-prompt}) }

    \ai{Gemini\,3 Deep Think}{Response}

\end{interactionlog}

\section{Interpretation of the challenge}

Since \fp{} was construed as an experimental trial without clearly defined rules, we first discuss our interpretation of the challenge. The \fp{} authors write in the FAQ of \url{1stproof.org}, 

\begin{quote}
\emph{What constitutes a solution?}

We consider that an AI model has answered one of our questions if it can produce in an autonomous way a proof that conforms to the levels of rigor and scholarship prevailing in the mathematics literature. In particular, the AI should not rely on human input for any mathematical idea or content, or to help it isolate the core of the problem. Citations should include precise statement numbers and should either be to articles published in peer-reviewed journals or to arXiv preprints.
\end{quote}

and in the paper \citep{abouzaid2026proof}, 

\begin{quote}
...it is not yet clear where AI systems stand at solving research-level math questions
on their own, without an expert in the loop.
\end{quote}

\textbf{Autonomy.} Even with these guidelines, we felt some ambiguity about what constitutes an ``autonomous solution''.
For example: if an AI produces a proof, and a human reviewer asks for clarification on a technical point, and the AI then elaborates to make the argument more rigorous, is the result considered autonomous? This sort of interaction happens all the time in human peer review. We think the answer could be ``yes'', at least if the full transcript of the interaction is provided and observers agree that the human input does not contain mathematical ideas or content. On the other hand, for research problems at the caliber of \fp{}, expertise is already required to identify possible weak points to ask about, so such an interaction cannot occur without an expert in the loop. 

Another question was whether human expertise can be used to select the best solution out of a number of attempts. Under our reading of the rules, this does not seem to be disallowed. But it offers a potentially huge performance advantage, which seems orthogonal to the evaluation of AI capability. 

Our approach to the challenge guaranteed autonomy in the strictest sense: \textbf{for the generation of our solutions, there was absolutely no human intervention}. Humans experts inspected the final output of this pipeline for evaluation purposes only, without altering any content. We ran two different agents and designated one ``preferred solution'' per problem, whose ratings are displayed in Table \ref{table:intro-results}. This designation admittedly draws upon our own expertise. 

 \textbf{Correctness.} We interpreted ``Correct'' as meaning ``publishable after minor revisions, within the established range of the peer review process'', consistent with the standards\footnote{\url{https://icarm.zulipchat.com/\#narrow/channel/568090-first-proof/topic/Mathematical.20standard/near/573992500}} voiced by the FirstProof authors. In particular, we do not claim that our solutions are publication-ready as originally generated. Many fail to meet the stated requirement that ``Citations should include precise statement numbers and should either be to articles published in peer-reviewed journals or to arXiv preprints'', but do meet the citation standards prevailing in the literature.

We emphasize that this is only our own interpretation of the challenge. Other reasonable interpretations exist, and the authors of \fp{} make clear in \citep{abouzaid2026proof} that it is not intended as a formal benchmark.

\section{Methodology and results}
We prompted the agent \aletheia{} from \citep{feng2026autonomousmathematicsresearch} with the problem statements from the \fp{} \LaTeX{} file, copy-pasted without any modification. The outputs of \aletheia{} were filtered, without any intermediate alteration, through a pre-determined verification and extraction prompt (exposed in \S \ref{sec:extraction-prompt}) designed to the stated standards of the \fp{} authors to produce ``a proof that conforms to the levels of rigor and scholarship prevailing in the mathematics literature.'' Moreover, the verification and extraction prompt elicited \LaTeX{} code directly as output, ensuring that manual intervention would not be required even for reformatting the response in a \LaTeX{} document. 

We then tried to judge the outputs of this pipeline, in some cases asking for help among our colleagues. In this process, we did not interact with the model at all--not even by prompting for clarification or elaboration on points we did not understand.  Our overall pipeline is illustrated below.

\includegraphics[scale=.5]{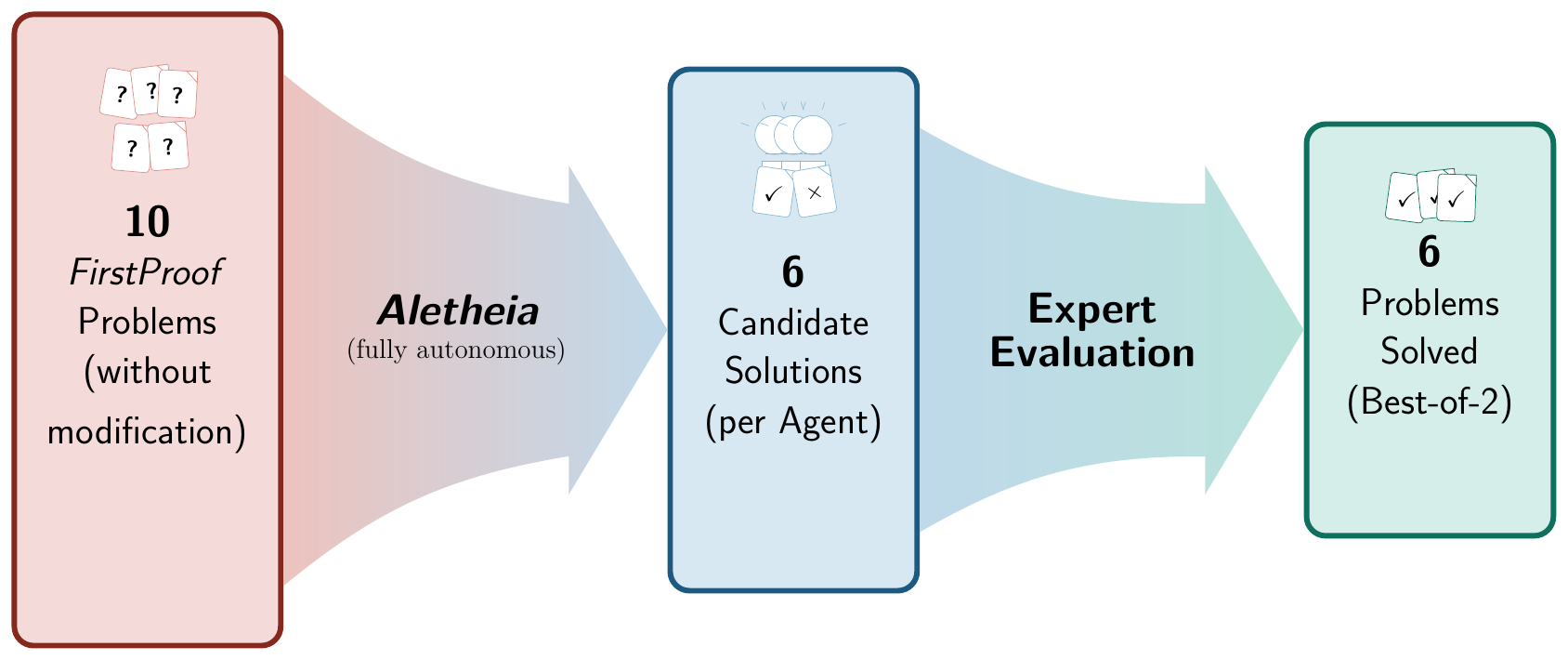}

For internal permissions reasons, we were not able to publicly release our results before official solutions were uploaded by the FirstProof authors on February 13, 11:59pm PST. In order to certify that our results were obtained without data contamination from these solutions, we e-mailed our solutions privately to the \fp{} authors at 11:07pm PST on February 13 (along with a preliminary version of this document including Table~\ref{table:results}, which represented our initial estimate of the correctness of the solutions at that point). We later shared\footnote{\url{https://icarm.zulipchat.com/\#narrow/channel/568090-first-proof/topic/Aletheia's.20solutions}} our solutions publicly on February 18, 9:27am PST and Mohammed Abouzaid (lead \fp{} author) confirmed, in the same thread, the existence of our solutions before the deadline.\footnote{Regrettably, there was one typo in our submission: the file labeled FP10\_A.pdf was instead for \agentb{} and should have been named FP10\_B.pdf; the submission for \agenta{} on FP\#10 was omitted, and is now included as FP10\_A.pdf.}

\subsection{\aletheia{} (Best of 2)}

We ran the agent \aletheia{} from \citep{feng2026autonomousmathematicsresearch} on two different base models. These will be designated as follows: 
\begin{enumerate}
    \item \agenta{}: with the same base model as Gemini 3 Deep Think \citep{gemini3deepthink2026} as of February 2026.
    \item \agentb{}: with the January 2026 base model of Gemini, referenced in \citep{feng2026autonomousmathematicsresearch}.
\end{enumerate}

\begin{table}[h]
\centering
\renewcommand{\arraystretch}{1.5} % Adjusts row height for readability
\begin{tabular}{|l|l|l|l|c|}
\hline
   & \textbf{\agenta{}} &  \textbf{\agentb{}} & \textbf{Zulip}\\ \hline
P1 & No output &  No output & \\ \hline
P2 & \color{green} Correct &  \color{green} Correct  &  \href{https://icarm.zulipchat.com/#narrow/channel/568090-first-proof/topic/Problem.202.20--.20Aletheia/with/574567015}{Link}\\ \hline
P3 & No output & No output &  \\ \hline
P4 & No output &  No output & \\ \hline
P5 & \color{green} Correct & {\color{red}Misinterpreted} & \href{https://icarm.zulipchat.com/#narrow/channel/568090-first-proof/topic/Problem.205.20--.20Aletheia/with/575042104}{Link} \\ \hline
P6 & No output &  No output & \\ \hline
P7 & \color{red}Critically Flawed &  \color{green} Correct & \href{https://icarm.zulipchat.com/#narrow/channel/568090-first-proof/topic/Problem.207.20--.20Aletheia/with/574990987}{Link} \\ \hline
P8 & \color{red} Inadequate   &  \color{green!70}Correct?  & \href{https://icarm.zulipchat.com/#narrow/channel/568090-first-proof/topic/Problem.208.20--.20Aletheia/with/574569368}{Link} \\ \hline
P9 & \color{green} Correct &  \color{green} Correct & \href{https://icarm.zulipchat.com/#narrow/channel/568090-first-proof/topic/Problem.209.20--.20Aletheia/with/574726804}{Link} \\ \hline
P10 & \color{green} Correct & \color{green} Correct & \href{https://icarm.zulipchat.com/#narrow/channel/568090-first-proof/topic/Problem.2010.20--.20Aletheia/with/574570445}{Link} \\ \hline
\end{tabular}
\caption{Our current ({\it post-deadline}) estimation of the results based on the consensus of expert assessments. On P8, the expert assessment was not unanimous. We include links with public comments (on Zulip) for individual problems. 
}
\label{table:resultsafter}
\end{table}

On the 10 \fp{} problems, our agents produced solution candidates to 6 problems (P2, P5, P7, P8, P9, P10). From a best-of-2 evaluation, the majority opinion of expert evaluations indicated that all 6 problems were solved correctly (under the interpretation of needing only minor revisions), although the assessments on P8 were not unanimous: only 5 out of 7 experts rated it Correct. The assessment of individual solutions is displayed in Table \ref{table:resultsafter}. Section \ref{ssec:evals} discusses the evaluations in more detail. 

For the other 4 problems (P1, P3, P4, P6) both of our agents returned no solution: either by explicitly outputting ``No solution found'', or by not returning any output within the time limit. This self-filtering feature was one of the key design principles of \aletheia{}; we view reliability as the primary bottleneck to scaling up AI assistance on research mathematics. We suspect that, given the limited bandwidth for human expert verification, many practicing researchers would prefer to trade raw problem-solving capability for increased accuracy.\footnote{This was our motivation for building \aletheia{}, and indeed the origin of the name.} 

\textbf{Inference cost.} The inference-time computation expended by \textit{Aletheia} on the \textit{FirstProof} problems can be interpreted as a rough proxy of the problem difficulty from the agent's perspective. In Figure \ref{fig:fp_compute}, we display the inference cost of each candidate solution as a multiple of the inference cost of the solution to Erd\H{o}s-1051 from (Feng et al., 2026a). Both base models used here are different from the one used in (Feng et al., 2026a), so the comparison is not on equal footing, but it gives some indication. For each problem, the inference cost exceeded that of Erd\H{o}s-1051.

For P7 in particular, the inference cost exceeded previously observed scales by an order of magnitude, both because the Generator subagent took much more computation to produce a candidate solution, and because more interactions were required to pass the Verifier subagent. We note that while most of the FirstProof problems were described as Lemmas arising in the recent research of the authors, P7 was advertised as an open problem in the book of Weinberger \citep{Wein23}, prior to its resolution by Cappell--Weinberger--Yan (not published until the \fp{} solutions). 

\begin{figure}[htbp]
    \centering
    \includegraphics[width=0.8\textwidth]{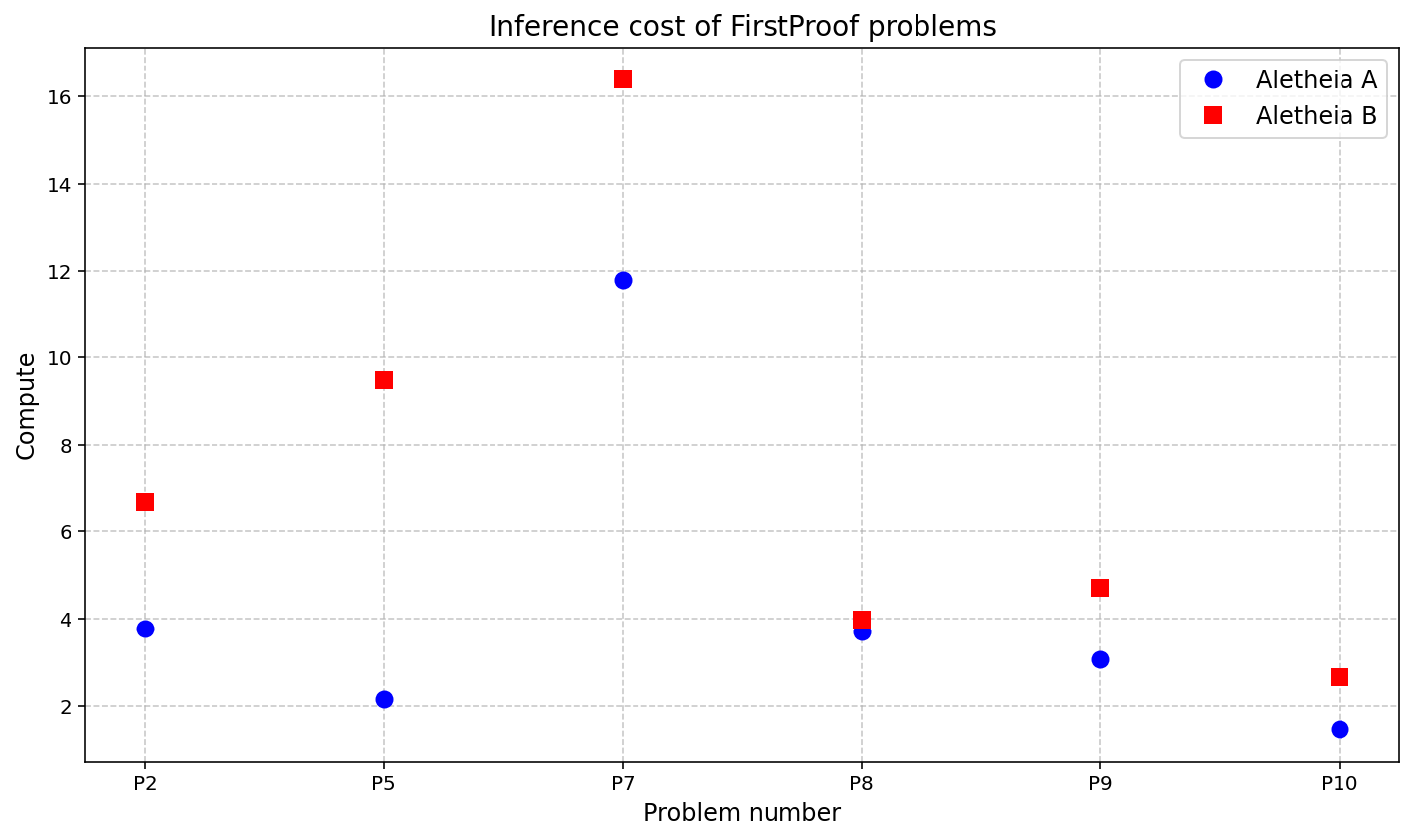}
    \caption{Plot of the inference cost per \fp{} problem, as a multiple of the inference cost of the solution to Erd\H{o}s-1051 from \citep{feng2026semiautonomousmathematicsdiscoverygemini}.}
    \label{fig:fp_compute}
\end{figure}

Not all of the problems required a large inference budget to solve. Aryan Mokhtari and David Woodruff succeeded in manually orchestrating the publicly available Gemini 3 Deep Think model to solve Problem 10, as described in Appendix~\ref{sec:public_dt_p10}. 

%---operates with less inference-time scaling per output. The larger inference cost multipliers seen for \textit{Aletheia} reflect its agentic orchestration: expending more compute on generation and verification to autonomously isolate and refine correct proofs. 

\subsection{Evaluations}\label{ssec:evals} 
To evaluate our outputs, we obtained independent feedback from at least two academic mathematicians (some of whom were partially affiliated with Google) for each problem. When the experts felt less confident, we solicited more opinions from academic mathematicians. Table \ref{table:resultsafter} summarizes our assessments. A problem-by-problem description is given below.

\textbf{P2.} Four out of four experts agreed that both solutions were Correct.  

\textbf{P5.} Experts pointed out that there was ambiguity in the formulation of the question. Four out of four experts agreed that \agenta{}'s solution is Correct. \agentb{} interpreted the ``slice filtration'' in an archaic way that differs from its modern usage. Because of this, reviewers classified \agentb{}'s solution as a Misinterpretation of the problem, and did not further vet it for mathematical correctness. 

\textbf{P7.} Three out of three experts agreed that \agentb{}'s solution is Correct. \agenta{}'s solution is Critically Flawed. It contains two arguments, both of which boil down to the claim that if $\sigma$ is an order 2 automorphism acting freely on a manifold $M$, then the (compactly supported) rational Euler characteristic of $M$ is divisible by $2$. The attempted justification invokes multiplicativity of (compactly supported) rational Euler characteristic, but this is not justified without appropriate finiteness assumptions on $M$; this fallacy is noted in the official problem comments.

\textbf{P8.} Experts deemed \agenta{}'s solution to P8 to be Inadequate. For \agentb{}'s solution to P8, three out of three external specialists in symplectic geometry judged it to be correct prior to the February 13 deadline. An internal mathematician expressed reservations, so we solicited more assessments, ultimately obtaining opinions from four specialists in symplectic geometry and three additional mathematicians with adjacent expertise. In total, three specialists and two adjacent mathematicians considered the solution to be Correct. A representative quote from this group was, ``Overall, while I wouldn't say this solution is perfect, I think it's reasonable to count it as a correct proof.'' The remaining specialist and adjacent mathematician considered the proof to be incomplete due to the level of detail. A representative quote from this group was, ``The shakiest part is indeed in the interpolation step when the local smoothings at the vertices of the polyhedral Lagrangian surface need to be extended to smoothings along the edges. I think it is fair to object that more detail is needed at this step, and this is true for the proof attempts provided by both agents.''

Upon examining the expert evaluations, we realized that all were essentially in agreement on the mathematical content, and the ambiguity came from subjective interpretation of whether the missing detail exceeded the threshold of ``minor revisions''. None of the experts expressed that there were errors in the argument, but most experts voiced that parts of Steps 3 and 4 were vague or sketchy (see \S \ref{ssec:evals}), and that the solution as a whole was not publishable without revisions. 

\textbf{P9.} Four out of four experts agreed that \agenta{}'s solution is correct. Two out of two experts agreed that \agentb{}'s solutions are correct.

\textbf{P10.} Two out of two experts agreed that both \agenta{}'s and \agentb{}'s solutions are correct.

% \subsection{Accuracy} 
\subsection{Further comparisons} 
\agenta{} and \agentb{} each produced candidate solutions for the same six problems. % out of ten. 
Each agent individually had at least one false positive, but their best-of-2 performance produced credible solutions to all six problems. This outcome shows a promising increase in accuracy over the December 2025 version of \aletheia{} used for the Erd\H{o}s problems in \citep{feng2026semiautonomousmathematicsdiscoverygemini}. Compared to that version, \agenta{} and \agentb{} featured improvements to both the agentic scaffolding and the base models.

% To provide additional context, we briefly comment on a related effort.

% Specifically, 
In addition to Aletheia, an independent evaluation of the publicly available Gemini 3 Deep Think model highlighted its strong capabilities. While not strictly autonomous---relying on two humans to sample and concatenate the best outputs ---this effort produced a solution to Problem 10 that {\it matches the optimal theoretical complexity bound also discovered autonomously by Aletheia A}, but with much less inference scaling; see Appendix~\ref{sec:public_dt_p10}.

% {\color{red}DW notes:
% \\\\
% (1) the early stop aletheia does correspond to the best DT solution up to some minor differences, and this does significantly beat human and OAI solutions since iteration count doesn't depend on q
% \\\\
% (2) the other DT solution is similar to OAI's solution, where the precondition is P itself to avoid expensive inversiona
% \\\\
% DW: will clean this up}

\section{Acknowledgments} We thank Daniel Alvarez-Gavela, Otis Chodosh, Vincent Cohen-Addad, Laurent Cote, Jim Davis, Alex Davies, Jim Fowler, Javier Gomez-Serrano, Bogdan Georgiev, Vineet Gupta, Euiwoong Lee, Gilad Lerman, Yaguang Li, Hanzhao (Maggie) Lin, Daniel Litt, Chi-Heng Lo, Aranyak Mehta, Mona Merling, Daniel Miao, Agustin Moreno, Danny Xiaolin Shi, George Tsoukalas, Allen Yuan, Yufei Zhao, Daniel Zheng, and Goran Zuzic for help. We are grateful to the Institute for Computer-Aided Reasoning in Mathematics under NSF grant DMS 2425401 for hosting online discussion of the FirstProof results. Thanks to Benoit Schillings, Koray Kavukcuoglu, Demis Hassabis, and Sergey Brin for support and for encouraging us to tackle harder problems.

\bibliography{references}

\appendix

\section{Verification and Extraction Prompt}\label{sec:extraction-prompt}
Below we display the verification and extraction prompt used on \aletheia{}'s outputs, which were run through Gemini 3 Deep Think. This prompt was designed elicit output of the quality and format requested by the FirstProof authors. In particular, it allowed us to obtain \LaTeX{} code directly as output, ensuring that manual intervention would not be required even for reformatting the response in a \LaTeX{} document. 

Running this prompt filtered out a P3 submission from \agentb{} with verdict [WRONG].  A [FIXABLE] verdict was returned on P5 and P7 from \agenta{}, and our logs display the (autonomously) revised outputs. Other outputs listed above were marked [CORRECT] by the extraction prompt, and no changes were made to produce the final output. 

\begin{lstlisting}
You are an expert peer reviewer for a top-tier academic journal. Your task is to rigorously evaluate a problem and its candidate solution. If you find a correct solution, output it as a latex document that conforms to the levels of rigor and scholarship prevailing in the mathematics literature.

Please approach the evaluation using the following structured process:

**1. Independent Verification**
Before evaluating the candidate, use your reasoning process to independently analyze the `<problem>` to determine the correct methodology and potential edge cases. Then, do a line-by-line verification of the `<candidate_solution>`. Actively search for logical fallacies, unstated assumptions, calculation errors, or lack of rigor.

After your reasoning process, format your final response exactly as follows:

### 1. Critique
Provide a concise summary of your analysis. Point out any specific flaws, leaps in logic, or informalities found in the candidate solution. The solution needs to conform to the levels of rigor and scholarship prevailing in the mathematics literature. If the solution cites the literature, carefully check that all citations include precise statement numbers and should either be to articles published in peer-reviewed journals or to arXiv preprints.

### 2. Verdict
Based on your critique, declare exactly ONE of the following verdicts in bold:
- **[CORRECT]**: The solution is flawless, completely rigorous, and requires no changes.
- **[WRONG]**: The solution is fundamentally flawed, relies on invalid logic, or cannot be salvaged without a complete rewrite of the core approach.
- **[FIXABLE]**: The core approach is sound, but it contains minor errors, skips necessary steps, or lacks formal academic rigor.

### 3. Resolution
Execute the corresponding action based on your verdict:
- If **[CORRECT]**: Briefly state why the solution meets publication standards.
- If **[WRONG]**: Explicitly detail the fatal flaw in the approach and mathematically/logically explain why it fails. (Do not write a new solution from scratch).
- If **[FIXABLE]**: Generate a **complete, corrected version** of the solution from start to finish. Do not merely list the fixes. The revision must be a cohesive, standalone proof/solution written at a level of completeness, clarity, and rigor suitable for peer-reviewed journal publication, conforming to the levels of rigor and scholarship prevailing in the mathematics literature. If the solution cites the literature, carefully check that all citations include precise statement numbers and should either be to articles published in peer-reviewed journals or to arXiv preprints.


<problem>
[INSERT PROBLEM HERE]
</problem>

<candidate_solution>
[INSERT CANDIDATE SOLUTION HERE]
</candidate_solution>
\end{lstlisting}

\section{Pre-deadline evaluations}\label{sec:results}

This Appendix records the ``best-of-2'' submission for each problem that was sent to the FirstProof authors on February 13, 2026. Originally, we ran another agent called \agenta{}f, a variant setting of \aletheia{} that overrode the natural orchestration to force greater inference expenditure during generation. \agenta{}f was aborted shortly after producing solutions to P2 and P9 to save inference cost. To preserve the best-of-2 nature of our attempt, we did not run P2, P9 (or P5) on \agentb{} until the evening of February 13, at which point we ran \agentb{} on those problems for future ablation purposes only (without reading the output).

Table~\ref{table:results} is reproduced exactly from the February 13 e-mail sent to the FirstProof authors. It displays the internal evaluations of the solutions as of February 13, 2026, and a ``first choice'' submission for each problem. \emph{The accuracy of these evaluations is superseded by Table~\ref{table:resultsafter}, which is based on more careful evaluations with a wider net of experts}; we include Table~\ref{table:results} only for historical reasons. Note, however, that the best-of-2 results from Table~\ref{table:results} and the best-of-2 results from Table~\ref{table:resultsafter} both align with Table~\ref{table:intro-results}. 

We discuss the differences between Table~\ref{table:results} and Table~\ref{table:resultsafter}.

\textbf{P2.} \agentb{}'s submission for P2, left unread before February 14, was later found to be \textbf{Correct}. 

\textbf{P5.} \agentb{}'s submission for P5, left unread before February 14, was later found to be based on a \textbf{Misinterpretation} of the question. 

\textbf{P7.} We were initially unable to validate \agentb{}'s solution to P7 due to its late appearance, as well as limited internal expertise. Based on prior experience with solutions of such complexity, we conservatively guessed that it was Incorrect. Later, upon closer examination, we realized that \agentb{}'s solution to P7 was in fact \textbf{Correct}.

\textbf{P8.} Prior to February 13, three out of three experts deemed \agenta's submission for P8 to be Correct (based on quick appraisals). Upon closer examination, at least one expert commented that ``it is sketchy in some important places (I think it’s fair to say there are some gaps, arguably errors), so does not reach the typical standard of a correct proof ''. Another expert said ``actually, maybe [\agenta{}'s solution] is a bit short of correct, but I think it is not far off''. We therefore changed the verdict to \textbf{Inadequate} in Table~\ref{table:resultsafter}. 

\textbf{P9.} \agentb{}'s solution to P9, left unread before February 14, was later found to be \textbf{Correct}. 

\textbf{P10.} \agenta{}'s solution to P10 was labeled Incorrect in Table \ref{table:results}. This turned out to be a miscommunication from one expert to the lead author, and after reassessment experts were unanimous that the original solution was correct. In fact, they considered it to be more optimal than \agentb{}'s solution, as Aletheia A autonomously derived a precomputation step that removes the $O(q)$ dependency from the iterative PCG loop. While we present Aletheia B's solution in Appendix~\ref{subsect:p10} as it was our initial ``first choice'' submission, the full Aletheia A solution is available online \href{https://icarm.zulipchat.com/#narrow/channel/568090-first-proof/topic/Problem.2010.20--.20Aletheia/with/574570445}{here}.

\begin{table}[h]
\centering
\renewcommand{\arraystretch}{1.5} % Adjusts row height for readability
\begin{tabular}{|l|l|l|l|}
\hline
\multicolumn{4}{|c|}{\textbf{Deprecated (pre-deadline) evaluations}} \\ \hline
   & \textbf{\agenta{}} & \textbf{\agenta{}f} & \textbf{\agentb{}} \\ \hline
P1 & N/A & N/A & N/A \\ \hline
P2 & \color{green}Correct & Correct & ? \\ \hline
P3 & N/A & N/A & \color{orange}Filtered out \\ \hline
P4 & N/A & N/A & N/A \\ \hline
P5 & \color{green}Correct & N/A &  ? \\ \hline
P6 & N/A & N/A & N/A \\ \hline
P7 & \color{red}Incorrect & N/A & Incorrect? \\ \hline
P8 & Correct & N/A & \color{green}Correct \\ \hline
P9 & Correct & \color{green}Correct &  ? \\ \hline
P10 & \color{red}Incorrect & N/A & \color{green}Correct \\ \hline
\end{tabular}
\caption{Our pre-deadline estimation of the results, as sent to the \fp{} authors at 11:07pm PST on February 13. {\color{green}Correct} = our first choice solution. {\color{red}Incorrect} = we are confident it is incorrect. {\color{orange}Filtered out} = filtered by the additional verification and extraction prompt. ? = ran for ablation purposes on Friday evening; not read before the deadline. \agenta{} is \aletheia{} with Gemini 3 Deep Think as base model. \agenta{}f is an extended generation variant of \agenta{}. \agentb{} is \aletheia{} with base model B, from \S 2 of \citep{feng2026autonomousmathematicsresearch}. `N/A' indicates either \aletheia{} returned ``No solution found'' or failed to return a solution within the time limit.}
\label{table:results}
\end{table}

\section{Raw prompts and outputs}
For each FirstProof problem answered by at least one of our two agents, we display the raw prompt and ``first choice'' output, according to our original evaluation in Table \ref{table:results}, which was as sent to the FirstProof authors prior to the deadline. We also include remarks with selected expert commentary. 

\subsection{Problem 2}

\begin{problem}{FirstProof \# 2}
    Let \(F\) be a non-archimedean local field with ring of integers \(\mathfrak o\).  Let $N_r$ denote the subgroup of $\mathrm{GL}_{r}(F)$ consisting of upper-triangular unipotent elements.  Let \(\psi:F\to \mathbb C^\times\) be a nontrivial additive character of conductor \(\mathfrak o\), identified in the standard way with a generic character of $N_r$.
Let \(\Pi\) be a generic irreducible admissible representation of \(\mathrm{GL}_{n + 1}(F)\), realized in its \(\psi^{-1}\)-Whittaker model \(\mathcal W(\Pi,\psi^{-1})\).  Must there exist \(W\in \mathcal W(\Pi,\psi^{-1})\) with the following property?

Let $\pi$ be a generic irreducible admissible representation of \(\mathrm{GL}_{n}(F)\), realized in its $\psi$-Whittaker model \(\mathcal W(\pi,\psi)\).  Let $\mathfrak{q}$ denote the conductor ideal of $\pi$, let \(Q\in F^\times\) be a generator of \(\mathfrak q^{-1}\), and set
\[
  u_Q := I_{n+1} + Q\,E_{n,n+1} \in \mathrm{GL}_{n + 1}(F),
\]
where \(E_{i, j}\) is the matrix with a \(1\) in the \((i, j)\)-entry and \(0\) elsewhere.  For some \(V\in \mathcal W(\pi,\psi)\), the local Rankin--Selberg integral
\[
  \int_{N_n\backslash \mathrm{GL}_{n}(F)} W(\operatorname{diag}(g,1) u_Q)\,V(g)\,|\det g|^{s-\frac12}\,dg
\]
is finite and nonzero for all \(s\in\mathbb C\).
\end{problem}

\begin{solution}{Model Response (Agent A)}
    
\section*{Introduction}
Let $F$ be a non-archimedean local field with ring of integers $\mathfrak{o}$ and maximal ideal $\mathfrak{p}$. We fix a uniformizer $\varpi \in \mathfrak{p}$ and let $q = |\mathfrak{o}/\mathfrak{p}|$. Let $N_r$ denote the subgroup of $\mathrm{GL}_r(F)$ consisting of upper-triangular unipotent elements, and let $K_r = \mathrm{GL}_r(\mathfrak{o})$ denote the standard maximal compact subgroup. Let $\psi: F \to \mathbb{C}^\times$ be a fixed nontrivial additive character of conductor $\mathfrak{o}$. 

\begin{theorem}
Let $\Pi$ be a generic irreducible admissible representation of $\mathrm{GL}_{n+1}(F)$. There exists $W \in \mathcal{W}(\Pi, \psi^{-1})$ such that for any generic irreducible admissible representation $\pi$ of $\mathrm{GL}_n(F)$ with conductor ideal $\mathfrak{q}$, and any generator $Q \in F^\times$ of $\mathfrak{q}^{-1}$, there exists $V \in \mathcal{W}(\pi, \psi)$ such that the local Rankin--Selberg integral
\[
  Z(s, W, V) = \int_{N_n\backslash \mathrm{GL}_{n}(F)} W(\operatorname{diag}(g,1) u_Q)\,V(g)\,|\det g|^{s-\frac12}\,dg
\]
is finite and nonzero for all $s\in\mathbb C$, where $u_Q = I_{n+1} + Q\,E_{n,n+1}$.
\end{theorem}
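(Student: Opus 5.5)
The plan is to choose $W$ so that its restriction to the mirabolic subgroup $P_{n+1}$ is essentially compactly supported near $N_{n+1}$. The twist by $u_Q$ then turns the zeta integral into a character-sum computation that picks out the newform $V$ of $\pi$. This is in the spirit of the Jacquet--Piatetski-Shapiro--Shalika theory of the Kirillov model.

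\textbf{Step 1: choice of $W$.} By the theory of the Kirillov model (Gelfand--Kazhdan, Bernstein--Zelevinsky), every function $\phi \in C_c^\infty(N_n\backslash \mathrm{GL}_n(F),\psi^{-1})$ arises as $g\mapsto W(\operatorname{diag}(g,1))$ for some $W\in\mathcal W(\Pi,\psi^{-1})$. I take $\phi$ supported on $N_n K_n^{(m)}$, where $K_n^{(m)}$ is a small principal congruence subgroup, normalized by $\phi(k)=1$ there. The subtlety is that $W$ must be independent of $\pi$, whereas the level of $\pi$ is arbitrary. So I instead take $\phi$ supported on $N_n\cdot \mathrm{GL}_n(\mathfrak o)$, given by $\phi(nk)=\psi^{-1}(n)$. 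This is well defined, since $\psi$ is trivial on $N_n(\mathfrak o)$ because its conductor is $\mathfrak o$.

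\textbf{Step 2: the twist by $u_Q$.} Next I compute $W(\operatorname{diag}(g,1)u_Q)$. We have $\operatorname{diag}(g,1)u_Q = u'\operatorname{diag}(g,1)$, where $u'=I+\sum_i g_{in}Q\,E_{i,n+1}$ lies in $N_{n+1}$. Hence
\[
W(\operatorname{diag}(g,1)u_Q)=\psi^{-1}(Q g_{nn})\,\phi(g).
\]
More precisely, the character sees only the $(n,n+1)$ entry, which is $Q g_{nn}$. So the integral becomes
\[
\int_{N_n\backslash N_n K_n}\psi^{-1}(Q k_{nn})\,V(k)\,dk,
\]
a finite sum-like integral with no dependence on $s$, since $|\det k|=1$. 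Finiteness for all $s$ is therefore automatic, and the only remaining issue is nonvanishing.

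\textbf{Step 3: nonvanishing (the main obstacle).} I need $V\in\mathcal W(\pi,\psi)$ with $\int_{K_n}\psi^{-1}(Qk_{nn})V(k)\,dk\neq 0$. Because $\pi$ is an arbitrary smooth representation, I can replace $V$ by $\pi(k_0)$-translates and averages. The functional $V\mapsto\int_{K}\psi^{-1}(Qk_{nn})V(k)\,dk$ is a $(K,\chi)$-type projection: the map $k\mapsto\psi(Qk_{nn})$ is not a character of $K$, but it is right-invariant under $K_1(\mathfrak q)$-type subgroups whose last row is $\equiv(0,\dots,0,1)$ modulo $\mathfrak q$.

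The strategy is to take $V$ to be the Jacquet--Piatetski-Shapiro--Shalika newform, which is fixed by $K_1(\mathfrak q)$ and satisfies $V(1)=1$. I then decompose $K_n=\bigsqcup N_n(\mathfrak o)\,\gamma\, K_1(\mathfrak q)$ by the last row modulo $\mathfrak q$. This reduces the integral to a sum over last rows $c$ modulo $\mathfrak q$ of $\psi^{-1}(Qc_n)$ times newform values. I expect this to become a Gauss/Ramanujan-type sum that must be shown nonzero, and this is the hardest part. I would first try to use the known support and values of the newform on these cosets, via Matringe's or Miyauchi's formulas, to show that only the coset with $c\equiv e_n$ contributes. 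Here $\psi^{-1}(Q)$ is a root of unity, which would give a nonzero volume multiple of $V(1)$.

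If that fails, the fallback is to modify $\phi$ in Step 1. I would restrict it to a subset like $N_n\cdot\{k: k_{nn}\in 1+\mathfrak p^{N}\}$, but this must be done uniformly in $\pi$, for example by exploiting that $Q k_{nn}$ varies and using Fourier inversion on the last row. Alternatively, I would reverse the roles and choose $V$ supported near $N_n\cdot\{k : \text{last row } \equiv e_n \bmod \mathfrak q\}$, which is possible by the Kirillov-model density for $\pi$. With that choice of $V$, the phase $\psi^{-1}(Qk_{nn})$ is constant on the support, and the integral is visibly nonzero.
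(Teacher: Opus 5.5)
Your Steps 1 and 2 are exactly the paper's. You use the same $W$, with $W(\operatorname{diag}(g,1))$ supported on $N_nK_n$ and equal to $\psi^{-1}(n)$ at $nk$. You get the same reduction to the $s$-independent functional $L_Q(V)=\int_{K_n}\psi^{-1}(Qk_{nn})V(k)\,dk$. The genuine gap is Step 3, the nonvanishing, which is the heart of the problem. None of your three routes works as stated.

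\begin{itemize}
\item \emph{Newform with one contributing coset.} The claim that, for the newform, only the coset with last row $\equiv e_n$ contributes is false already for $n=1$. There $\pi=\chi$ has conductor $\mathfrak p^c$, and $\mathcal W(\pi,\psi)=\mathbb C\chi$. Since $\chi$ is nonzero on every coset of $1+\mathfrak p^c$ in $\mathfrak o^\times$, $L_Q(\chi)$ is a full Gauss sum. It is nonzero because $v(Q)=-c$ exactly, not because of any support property.
\item \emph{Coset decomposition.} The function $k\mapsto\psi(Qk_{nn})$ is \emph{left}-invariant under $K_1(\mathfrak q)$, not right-invariant. The last row modulo $\mathfrak q$ parametrizes $K_1(\mathfrak q)\backslash K_n$, not the double cosets $N_n(\mathfrak o)\backslash K_n/K_1(\mathfrak q)$ you wrote.
\item \emph{Shrinking $\phi$.} This cannot be done uniformly. $W$ is fixed before $\pi$, and $v(Q)=-c$ is unbounded as $\pi$ varies.
\item \emph{Prescribing $V$ via Kirillov density.} Density of the Kirillov model for $\pi$ lets you prescribe $V$ only on the mirabolic $P_n$. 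It does not let you prescribe the support of $V|_{K_n}$; for $n=1$ there is no freedom at all.
\end{itemize}

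The missing idea, which is the paper's, is to use all $K_n$-translates $\pi(h^{-1})V_0$ of the newform and argue by contradiction, with finite Fourier analysis on $G=(\mathfrak o/\mathfrak p^c)^n$. (The unramified case is handled separately by the spherical vector, giving $\operatorname{vol}(K_n)$.)

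\begin{itemize}
\item Suppose $L_Q(\pi(h^{-1})V_0)=0$ for every $h\in K_n$. Set $H(\eta)=\int_{\{k\in K_n:\ e_nk\equiv\eta\}}V_0(k)\,dk$.
\item Then $\widehat H(y)=\sum_{\eta}H(\eta)\psi(-Q\eta y)$ vanishes for every unimodular $y$, since these are exactly the last columns $he_n^{T}$. So $\widehat H$ is supported on $\mathfrak pG$.
\item Fourier inversion, together with $Q\,\mathfrak p^{c-1}\mathfrak p\subset\mathfrak o$, makes $H$ invariant under translation by $\mathfrak p^{c-1}G$.
\item Using the $K_1(\mathfrak p^c)$-invariance of $V_0$ and $V_0(1)=1$, this gives
$\int_{K_1(\mathfrak p^{c-1})}V_0(k)\,dk=\sum_{\delta\in\mathfrak p^{c-1}G}H(e_n+\delta)=q^nH(e_n)=q^n\operatorname{vol}(K_1(\mathfrak p^c))>0$.
\item But the left side is the value at $1$ of $\int_{K_1(\mathfrak p^{c-1})}\pi(k)V_0\,dk$. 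That vector is zero because $\pi$ has no nonzero $K_1(\mathfrak p^{c-1})$-fixed vectors, a contradiction.
\end{itemize}

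This last step is where the hypothesis that $Q$ generates exactly $\mathfrak q^{-1}$ (the exactness of the conductor) is used. Your proposal never identifies where that hypothesis enters.
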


\begin{proof}
\textbf{Step 1: Gelfand--Kazhdan restriction and $s$-independence.}

We evaluate the right translation of $W$ by $u_Q$. Block matrix multiplication yields
\[
  \operatorname{diag}(g, 1) u_Q = U_Q \operatorname{diag}(g, 1), \quad \text{where} \quad U_Q = I_{n+1} + Q \sum_{i=1}^n g_{i,n} E_{i, n+1}.
\]
Because $U_Q \in N_{n+1}$ and its only superdiagonal entry (i.e., immediately above the main diagonal) is located at $(n, n+1)$ with value $Q g_{n,n}$, the left $N_{n+1}$-equivariance of the Whittaker model $\mathcal{W}(\Pi, \psi^{-1})$ entails
\[
  W(\operatorname{diag}(g, 1) u_Q) = \psi^{-1}(Q g_{n,n}) W(\operatorname{diag}(g, 1)) = \psi(-Q g_{n,n}) W(\operatorname{diag}(g, 1)).
\]
By the Gelfand--Kazhdan restriction theory for the Kirillov model (H.\ Jacquet, I.\ I.\ Piatetski-Shapiro, and J.\ A.\ Shalika, \textit{Rankin--Selberg Convolutions}, 1983), the restriction map $\left.W \mapsto W\right|_{\mathrm{GL}_n(F)}$ surjects onto a space of functions containing $\mathcal{C}_c^\infty(N_n \backslash \mathrm{GL}_n(F), \psi^{-1})$. 

We define a smooth cut-off function $\Phi \in \mathcal{C}_c^\infty(N_n \backslash \mathrm{GL}_n(F), \psi^{-1})$ supported precisely on the open and closed double coset $N_n K_n$ by setting $\Phi(n k) = \psi^{-1}(n)$ for $n \in N_n$ and $k \in K_n$, and extending it by zero elsewhere. This is well-defined because $\psi$ is trivial on the intersection $N_n \cap K_n = N_n \cap \mathrm{GL}_n(\mathfrak{o})$. We fix a choice of $W \in \mathcal{W}(\Pi, \psi^{-1})$ satisfying $W(\operatorname{diag}(g, 1)) = \Phi(g)$.

Substituting this test vector $W$ restricts the domain of integration strictly to the compact quotient $N_n \backslash N_n K_n \simeq (N_n \cap K_n) \backslash K_n$. For $k \in K_n$, we have $|\det k| = 1$, which completely eliminates the complex parameter $s$. Normalizing the quotient measure appropriately, the integral converges absolutely to a finite, $s$-independent functional:
\[
  L_Q(V) = \int_{K_n} \psi(-Q k_{n,n}) V(k) \, dk.
\]
We are reduced to showing that there exists $V \in \mathcal{W}(\pi, \psi)$ such that $L_Q(V) \neq 0$.

\textbf{Step 2: The unramified case ($c=0$).}

If $\pi$ is unramified, its conductor ideal is $\mathfrak{q} = \mathfrak{o}$, meaning $Q \in \mathfrak{o}^\times$. We evaluate the functional on the normalized spherical vector $V = V_0$, which satisfies $V_0(k) = 1$ for all $k \in K_n$. Since $k_{n,n} \in \mathfrak{o}$ and $Q \in \mathfrak{o}^\times$, we have $-Q k_{n,n} \in \mathfrak{o}$. Because the additive character $\psi$ has conductor $\mathfrak{o}$, it follows that $\psi(-Q k_{n,n}) = 1$. The functional thus yields $L_Q(V_0) = \operatorname{vol}(K_n) > 0$.

\textbf{Step 3: Finite Fourier analysis setup ($c \ge 1$).}

Assume $\pi$ has conductor $\mathfrak{q} = \mathfrak{p}^c$ with $c \ge 1$. Here, $Q = \alpha \varpi^{-c}$ for some unit $\alpha \in \mathfrak{o}^\times$. Let $V_0 \in \mathcal{W}(\pi, \psi)$ be the essential newform, properly normalized so that $V_0(I_n) = 1$. 

Suppose, for the sake of contradiction, that $L_Q(\pi(h^{-1}) V_0) = 0$ for all $h \in K_n$. Evaluating the functional and making the change of variables $k \mapsto k h$ yields:
\[
  \int_{K_n} \psi(-Q (k h)_{n,n}) V_0(k) \, dk = 0 \quad \text{for all } h \in K_n.
\]
Let $e_n = (0, \dots, 0, 1)$ be viewed as a row vector in $\mathfrak{o}^n$. Then $(k h)_{n,n} = e_n k h e_n^T = (e_n k) y$, where $y = h e_n^T$ is a column vector. As $h$ traverses $K_n$, the vector $y$ traverses all unimodular column vectors in $\mathfrak{o}^n$. 

Because $Q = \alpha \varpi^{-c}$, the value $\psi(-Q(e_n k) y)$ depends on the row vector $e_n k$ exclusively modulo $\mathfrak{p}^c$. We descend to the finite quotient module $G = (\mathfrak{o}/\mathfrak{p}^c)^n$ by defining a function $H : G \to \mathbb{C}$ as follows:
\[
  H(\eta) = \int_{\{k \in K_n : e_n k \equiv \eta \pmod{\mathfrak{p}^c}\}} V_0(k) \, dk.
\]
If $\eta$ does not lift to a unimodular vector in $\mathfrak{o}^n$, the domain of integration is empty, forcing $H(\eta) = 0$. The vanishing assumption dictates that the finite Fourier transform of $H$ is identically zero on all unimodular vectors $y \in G$:
\[
  \widehat{H}(y) = \sum_{\eta \in G} H(\eta) \psi(-Q \eta y) = 0.
\]

\textbf{Step 4: Fourier inversion and translation invariance.}

Since $\widehat{H}(y) = 0$ for all unimodular $y$, the support of $\widehat{H}$ is restricted to non-unimodular vectors. Over the finite module $G$, a vector is non-unimodular if and only if all its entries belong to $\mathfrak{p}/\mathfrak{p}^c$, meaning $\widehat{H}$ is supported entirely on $\mathfrak{p}G$.

Applying the Fourier inversion formula over $G$, we obtain:
\[
  H(\eta) = \frac{1}{|G|} \sum_{y \in \mathfrak{p}G} \widehat{H}(y) \psi(Q \eta y).
\]
Let $y \in \mathfrak{p}G$, guaranteeing $y = \varpi z$ for some column vector $z \in (\mathfrak{o}/\mathfrak{p}^{c-1})^n$. For an arbitrary shift $\delta \in \mathfrak{p}^{c-1}G$, we may write $\delta = \varpi^{c-1} x$ with a row vector $x \in G$. The inner product computes as:
\[
  Q \delta y = (\alpha \varpi^{-c}) (\varpi^{c-1} x) (\varpi z) = \alpha x z \in \mathfrak{o}.
\]
Because the additive character $\psi$ has conductor $\mathfrak{o}$, we deduce $\psi(Q \delta y) = \psi(\alpha x z) = 1$. The additive character thus neutralizes the shift, giving $\psi(Q(\eta + \delta) y) = \psi(Q \eta y)$. Consequently, $H(\eta)$ exhibits translation invariance:
\[
  H(\eta + \delta) = H(\eta) \quad \text{for all } \delta \in \mathfrak{p}^{c-1}G.
\]

\textbf{Step 5: Level lowering and contradiction.}

We evaluate $H(e_n)$. The domain of integration enforces the condition $e_n k \equiv e_n \pmod{\mathfrak{p}^c}$, which strictly characterizes the mirabolic congruence subgroup $K_1(\mathfrak{p}^c)$ consisting of matrices in $K_n$ whose last row is congruent to $e_n$ modulo $\mathfrak{p}^c$. By foundational theory (H.\ Jacquet, I.\ I.\ Piatetski-Shapiro, and J.\ A.\ Shalika, \textit{Conducteur des repr\'esentations du groupe lin\'eaire}, 1981), the essential newform $V_0$ is invariant under right translation by $K_1(\mathfrak{p}^c)$. Since $V_0(I_n) = 1$, we obtain:
\[
  H(e_n) = \int_{K_1(\mathfrak{p}^c)} V_0(k) \, dk = \operatorname{vol}(K_1(\mathfrak{p}^c)) > 0.
\]
By the translation invariance established above, summing $H$ over the $q^n$ mutually distinct shifts $\delta \in \mathfrak{p}^{c-1}G$ yields:
\[
  \sum_{\delta \in \mathfrak{p}^{c-1}G} H(e_n + \delta) = q^n H(e_n) > 0.
\]
Conversely, returning to the definition of $H$, the disjoint union of the integration domains $\{k \in K_n : e_n k \equiv e_n + \delta \pmod{\mathfrak{p}^c}\}$ systematically aggregates to the coarser condition $e_n k \equiv e_n \pmod{\mathfrak{p}^{c-1}}$. This aggregated domain precisely identifies the broader mirabolic congruence subgroup $K_1(\mathfrak{p}^{c-1})$. Therefore:
\[
  \sum_{\delta \in \mathfrak{p}^{c-1}G} H(e_n + \delta) = \int_{K_1(\mathfrak{p}^{c-1})} V_0(k) \, dk.
\]
This integral calculates the evaluation at $I_n$ of the locally averaged vector 
\[
  v = \int_{K_1(\mathfrak{p}^{c-1})} \pi(k) V_0 \, dk \in \mathcal{W}(\pi, \psi).
\]
Because the conductor of $\pi$ is exactly $\mathfrak{p}^c$, the representation $\pi$ contains no nonzero vectors invariant under the larger mirabolic congruence subgroup $K_1(\mathfrak{p}^{c-1})$. Thus, the vector $v$ identically vanishes, mandating $v(I_n) = 0$.

We are thereby led to the contradiction $0 = q^n \operatorname{vol}(K_1(\mathfrak{p}^c)) > 0$. This contradiction demonstrates that the uniform vanishing hypothesis must be false. We conclude that there exists some translate $V = \pi(h^{-1}) V_0$ for which the local Rankin--Selberg integral is finite and nonzero, completing the proof.
\end{proof}
\end{solution}

\begin{remark} The response's citation to (H.\ Jacquet, I.\ I.\ Piatetski-Shapiro, and J.\ A.\ Shalika, \textit{Rankin--Selberg Convolutions}, 1983) should be refined to specifically (2.2) of that paper. The response's citation to (H.\ Jacquet, I.\ I.\ Piatetski-Shapiro, and J.\ A.\ Shalika, \textit{Conducteur des repr\'esentations du groupe lin\'eaire}, 1981) should be refined to specifically Théorème (5.1) of that paper. Other than these minor points, the solution is flawless. 
\end{remark}

\subsection{Problem 5}

\begin{problem}{FirstProof \# 5} Fix a finite group $G$.  Let $\aO$ denote an incomplete transfer
system associated to an $N_\infty$ operad.  Define the slice
filtration on the $G$-equivariant stable category adapted to $\aO$ and
state and prove a characterization of the $\aO$-slice connectivity of
a connective $G$-spectrum in terms of the geometric fixed points.
\end{problem}

\begin{solution}{Model Response (Agent A)}
\setcitestyle{numbers,round}
\section*{Setup and Structural Invariants}

In equivariant stable homotopy theory, commutative ring spectra can be parameterized by $N_\infty$ operads, which specify the sets of admissible norm maps. By the foundational work of Blumberg and Hill \cite[Section 4]{BlumbergHill2015}, and its subsequent combinatorial classification by Rubin \cite[Theorem 3.7]{Rubin2020} via indexing systems, the data of an $N_\infty$ operad on a finite group $G$ can be equivalently captured by a partial order formalism known as an \emph{incomplete transfer system}. This formulation was explicitly introduced and formalized by Balchin, Barnes, and Roitzheim \cite[Definition 2.4]{BBR2021}.

\begin{definition}[Balchin, Barnes, and Roitzheim {\cite[Definition 2.4]{BBR2021}}]
A \emph{transfer system} $\mathcal{O}$ on a finite group $G$ is a partial order $\subseteq_{\mathcal{O}}$ on the set of subgroups of $G$ that refines inclusion and satisfies two axioms:
\begin{enumerate}
\item \textbf{Conjugation:} $K \subseteq_{\mathcal{O}} H \implies gKg^{-1} \subseteq_{\mathcal{O}} gHg^{-1}$ for all $g \in G$.
\item \textbf{Restriction:} If $K \subseteq_{\mathcal{O}} H$ and $J \le H$, then $K \cap J \subseteq_{\mathcal{O}} J$.
\end{enumerate}
\end{definition}

Because the set of subgroups $\{ K \le H \mid K \subseteq_{\mathcal{O}} H \}$ is finite and closed under intersection (via the restriction axiom and the transitivity of the partial order), it possesses a unique minimal element, which we denote by $H_{\mathcal{O}}$. This allows us to define a natural index for subgroups adapted to the operad.

\begin{definition}
The \emph{$\mathcal{O}$-index} of a subgroup $J \le G$ is defined as the maximal index of an $\mathcal{O}$-admissible subgroup of $J$, which evaluates to:
\[ \| J \|_{\mathcal{O}} := [J : J_{\mathcal{O}}]. \]
\end{definition}

\begin{definition}
A finite $H$-set $T$ is \emph{$\mathcal{O}$-admissible} if the stabilizer of every point $t \in T$ is an $\mathcal{O}$-admissible subgroup of $H$ (i.e., $\operatorname{Stab}_H(t) \subseteq_{\mathcal{O}} H$). A permutation representation is $\mathcal{O}$-admissible if it is isomorphic to $\mathbb{R}[T]$ for an $\mathcal{O}$-admissible $H$-set $T$. Let $RO_{\mathcal{O}}^+(H)$ denote the monoid of isomorphism classes of $\mathcal{O}$-admissible permutation representations of $H$.
\end{definition}

We adapt the regular slice filtration (cf.\ Hill, Hopkins, and Ravenel \cite[Section 4.1]{HHR2016}; Hill and Yarnall \cite[Section 2]{HillYarnall2018}) to the incomplete transfer system $\mathcal{O}$ as follows:

\begin{definition}
For an integer $n \ge 0$, the \emph{$\mathcal{O}$-slice category} $\Sigma_{\ge n}^{\mathcal{O}}$ is the full subcategory of connective genuine $G$-spectra generated (under arbitrary wedges, extensions, and homotopy colimits) by the $\mathcal{O}$-slice cells:
\[ \left\{ G_+ \wedge_H S^V \;\middle|\; H \le G, \, V \in RO_{\mathcal{O}}^+(H), \text{ and } \dim_{\mathbb{R}}(V) \ge n \right\}. \]
\end{definition}

\section*{The Main Theorem}

We generalize the characterization of slice connectivity from Hill and Yarnall \cite[Theorem 2.5]{HillYarnall2018}, providing a sharp equivalence between the $\mathcal{O}$-slice filtration and the connectivity of geometric fixed points.

\begin{theorem}
\label{thm:main}
Let $\mathcal{O}$ be an incomplete transfer system on a finite group $G$, and let $n \ge 0$. A connective $G$-spectrum $X$ belongs to the $\mathcal{O}$-slice category $\Sigma_{\ge n}^{\mathcal{O}}$ if and only if for every subgroup $J \le G$, the geometric fixed point spectrum $\Phi^J(X)$ is $\lceil n / \| J \|_{\mathcal{O}} \rceil$-connective (i.e., it belongs to the non-equivariant category $\mathrm{Sp}_{\ge \lceil n / \| J \|_{\mathcal{O}} \rceil}$).
\end{theorem}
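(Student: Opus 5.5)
We prove the two implications separately, following the strategy of Hill--Yarnall \cite[Theorem 2.5]{HillYarnall2018}. Geometric fixed points are exact and commute with homotopy colimits. Hence the full subcategory of connective $X$ satisfying the stated geometric fixed point condition is closed under wedges, extensions and homotopy colimits. Since $\Sigma^{\mathcal{O}}_{\ge n}$ is generated under these operations by the $\mathcal{O}$-slice cells, the forward implication reduces to checking the condition on a single cell $G_+\wedge_H S^V$, where $V=\mathbb{R}[T]$, $T$ is an $\mathcal{O}$-admissible $H$-set, and $\dim V = |T|\ge n$.

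\textbf{Forward direction.} By the double coset formula, $\Phi^J(G_+\wedge_H S^V)$ is a wedge of spheres $S^{(V')^{J'}}$. Here $J'=J\cap gHg^{-1}$ ranges over the relevant conjugates, and $V'$ is the restriction of a conjugate of $V$; only those $g$ with $J\le gHg^{-1}$ contribute. So it suffices to show that for $J\le H$ we have $\dim V^J\ge \lceil |T|/\|J\|_{\mathcal{O}}\rceil$.

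Now $\dim \mathbb{R}[T]^J$ equals the number of $J$-orbits on $T$. By the restriction axiom, each $J$-stabilizer $\operatorname{Stab}_H(t)\cap J$ is $\mathcal{O}$-admissible in $J$, so it contains the minimal element $J_{\mathcal{O}}$. Hence every $J$-orbit has size at most $[J:J_{\mathcal{O}}]=\|J\|_{\mathcal{O}}$, and the number of orbits is at least $|T|/\|J\|_{\mathcal{O}}$. Since that number is an integer, it is at least the ceiling. We must also verify that the closure of admissible sets under intersection yields a genuine minimum $J_{\mathcal{O}}$ which is itself admissible in $J$. This uses restriction together with transitivity, as the paper asserts.

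\textbf{Reverse direction.} We induct on the subgroup lattice via isotropy separation, as in Hill--Yarnall. Suppose $X$ satisfies the fixed point condition. Choose a family $\mathcal{F}$ and a subgroup $J$ that is maximal among those not in $\mathcal{F}$. The cofiber sequence $E\mathcal{F}_+\wedge X\to X\to \tilde E\mathcal{F}\wedge X$ reduces the problem to two pieces:
\begin{enumerate}
\item showing that $G_+\wedge_{N(J)}(\tilde E\mathcal{F}\wedge X)$-type pieces, which are determined by $\Phi^J X$ as a $W_G J$-spectrum, lie in $\Sigma^{\mathcal{O}}_{\ge n}$;
\item showing that $E\mathcal{F}_+\wedge X$ lies in $\Sigma^{\mathcal{O}}_{\ge n}$, which follows by induction after restricting to proper subgroups.
\end{enumerate}
For piece (1), write $m=\lceil n/\|J\|_{\mathcal{O}}\rceil$. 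We need an $\mathcal{O}$-slice cell whose $J$-geometric fixed points are $m$-dimensional and whose fixed points at smaller subgroups are appropriately connective. The natural candidate is $S^{m\,\mathbb{R}[J/J_{\mathcal{O}}]}$, induced from $J$. It has dimension $m\|J\|_{\mathcal{O}}\ge n$, its $J$-fixed points have dimension $m$, and $J/J_{\mathcal{O}}$ is $\mathcal{O}$-admissible because $J_{\mathcal{O}}\subseteq_{\mathcal{O}} J$. Smashing this cell with $\tilde E\mathcal{F}$ and with free $W_GJ$-cells of dimension $\ge m$ builds $\tilde E\mathcal{F}\wedge X$ from objects of $\Sigma^{\mathcal{O}}_{\ge n}$. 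This step also requires that $\Sigma^{\mathcal{O}}_{\ge n}$ is closed under smashing with $\tilde E\mathcal{F}$; that follows because $\tilde E\mathcal{F}$ is a colimit of representation spheres whose fixed points under members of $\mathcal{F}$ are trivial, and smashing with $S^{W}$ preserves the slice category.

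\textbf{Main obstacle.} The hard step is the reverse direction. Specifically, we must show that $\Sigma^{\mathcal{O}}_{\ge n}$ contains $G_+\wedge_J(S^{m\mathbb{R}[J/J_{\mathcal{O}}]}\wedge \tilde E\mathcal{P}_J)\wedge(\text{connective})$, and that the induction over families is compatible with the $\mathcal{O}$-index of each subgroup. The worry is that $\|\cdot\|_{\mathcal{O}}$ is not monotone in a naive way, so the restricted cells may not be admissible. Our first approach is to prove, using the restriction axiom, that $\|K\|_{\mathcal{O}}\le \|J\|_{\mathcal{O}}$ for $K\le J$. We would then run the Hill--Yarnall argument verbatim with $|J|$ replaced by $\|J\|_{\mathcal{O}}$ and regular representations replaced by $\mathbb{R}[J/J_{\mathcal{O}}]$.
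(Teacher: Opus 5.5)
Your forward direction matches the paper: reduce to cells, apply the geometric double coset formula, and count $J$-orbits using that each stabilizer $\operatorname{Stab}_H(t)\cap J$ is admissible in $J$ and so contains $J_{\mathcal{O}}$. The reverse direction uses the paper's ingredients too (isotropy separation, induction over proper subgroups, the cell built from $\mathbb{R}[G/G_{\mathcal{O}}]$), but your set-up is garbled.

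For a family $\mathcal{F}$, any $J\notin\mathcal{F}$ forces $G\notin\mathcal{F}$, so the only subgroup maximal among non-members is $G$. Moreover, $\tilde E\mathcal{F}\wedge X$ is governed by a single geometric fixed point spectrum only when $\mathcal{F}=\mathcal{P}$ is the family of proper subgroups. With that choice, $J=G$ and $W_GJ=1$, and your pieces (2) and (1) become exactly the paper's Steps~1 and~2. If you meant $J$ minimal instead, you would need the full filtration by families. Its layers are $E\mathcal{F}'_+\wedge\tilde E\mathcal{F}\wedge X$, not $\tilde E\mathcal{F}\wedge X$.

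There are two smaller points. First, $S^{m\,\mathbb{R}[G/G_{\mathcal{O}}]}$ already has $m$-dimensional $G$-fixed points. So you should smash it with $S^{k-m}$ for $k\ge m$, not with cells of dimension $\ge m$; alternatively, use $S^{k\,\mathbb{R}[G/G_{\mathcal{O}}]}$ for each $k\ge m$, as the paper does. Second, getting $\tilde E\mathcal{P}\wedge S^{k}\in\Sigma^{\mathcal{O}}_{\ge n}$ from closure of $\Sigma^{\mathcal{O}}_{\ge n}$ under $\tilde E\mathcal{P}\wedge(-)$ is a valid alternative to the paper's route. The paper instead takes the cofiber of $E\mathcal{P}_+\wedge S^V\to S^V$ and uses the inductive hypothesis. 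Your justification is off, though. It is not that $S^W\wedge(-)$ obviously preserves the category, since $V\oplus W$ is not an admissible permutation representation. The real reason is that $G/K_+\wedge(-)$ sends slice cells to slice cells by the restriction and conjugation axioms. Hence the category is closed under smashing with any based $G$-CW complex, including $\tilde E\mathcal{P}$.

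The genuine error is your proposed fix for the ``main obstacle'': the inequality $\|K\|_{\mathcal{O}}\le\|J\|_{\mathcal{O}}$ for $K\le J$ is false. Restriction gives $J_{\mathcal{O}}\cap K\subseteq_{\mathcal{O}}K$, hence $K_{\mathcal{O}}\subseteq K\cap J_{\mathcal{O}}$, and this bounds $\|K\|_{\mathcal{O}}$ from \emph{below}. For a counterexample, take $G=C_{p^2}$ with the single nontrivial relation $e\subseteq_{\mathcal{O}}C_p$. This is a transfer system with $\|C_p\|_{\mathcal{O}}=p>1=\|C_{p^2}\|_{\mathcal{O}}$.

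Fortunately the step is unnecessary:
\begin{itemize}
\item The only index compatibility the induction needs is $\|J\|_{\mathcal{O}_H}=\|J\|_{\mathcal{O}}$ for $J\le H$. This holds because $J_{\mathcal{O}}$ depends only on the relations among subgroups of $J$.
\item Restricted and induced cells stay admissible by the restriction and conjugation axioms.
\item The bound $\dim V^J\ge n/\|J\|_{\mathcal{O}}$ for $V=k\,\mathbb{R}[G/G_{\mathcal{O}}]$ at proper $J$ is your own orbit-counting lemma. That lemma treats each $J$ separately and never compares indices of different subgroups.
\end{itemize}
Drop the monotonicity claim, and the argument goes through as in the paper.
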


\section*{A Combinatorial Lemma}

We first establish a strict lower bound on the fixed-point dimensions of $\mathcal{O}$-admissible representations.

\begin{lemma}
\label{lem:dim_bound}
For any $H \le G$, $V \in RO_{\mathcal{O}}^+(H)$, and $J \le H$, the dimension of the $J$-fixed points satisfies
\[ \dim(V^J) \ge \frac{\dim(V)}{\| J \|_{\mathcal{O}}}. \]
\end{lemma}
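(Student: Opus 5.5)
The plan is to reduce the inequality to a bound on the sizes of individual $J$-orbits. By definition of $RO_{\mathcal{O}}^+(H)$ we may assume $V = \mathbb{R}[T]$ for an $\mathcal{O}$-admissible finite $H$-set $T$. Then $\dim(V) = |T|$. Also $\dim(V^J)$ equals the number of $J$-orbits on $T$, since the $J$-fixed vectors of a permutation representation are spanned by the orbit sums. Writing $T = \coprod_{i} J t_i$ as a disjoint union of $J$-orbits, we have $|T| = \sum_i |J t_i|$. It therefore suffices to show that every $J$-orbit satisfies $|J t| \le \|J\|_{\mathcal{O}} = [J : J_{\mathcal{O}}]$. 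Summing this over the orbits gives $\dim(V) \le \|J\|_{\mathcal{O}} \cdot \dim(V^J)$, which is the claim.

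For the orbit bound, fix $t \in T$ and put $K = \operatorname{Stab}_H(t)$. By admissibility of $T$ we have $K \subseteq_{\mathcal{O}} H$. Its stabilizer in $J$ is $K \cap J$, so $|J t| = [J : K \cap J]$. Since $J \le H$, the restriction axiom of the transfer system gives $K \cap J \subseteq_{\mathcal{O}} J$.

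It remains to see that $J_{\mathcal{O}} \le K \cap J$, which yields $[J : K\cap J] \le [J : J_{\mathcal{O}}]$. This is the one step I would check carefully: that $J_{\mathcal{O}}$, described above as the unique minimal element of $\{K' \le J \mid K' \subseteq_{\mathcal{O}} J\}$, is in fact contained in every member of that set, not merely minimal. This follows from closure of the set under intersection. If $K_1, K_2 \subseteq_{\mathcal{O}} J$, then restriction (applied with $K_2 \le J$) gives $K_1 \cap K_2 \subseteq_{\mathcal{O}} K_2$, and transitivity then gives $K_1 \cap K_2 \subseteq_{\mathcal{O}} J$. Hence $J_{\mathcal{O}}$ is the intersection of all $\mathcal{O}$-admissible subgroups of $J$, and in particular $J_{\mathcal{O}} \le K \cap J$. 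With this in hand the lemma follows immediately, and no step beyond this bookkeeping should present a real obstacle.
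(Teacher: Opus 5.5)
Your proposal is correct and follows essentially the paper's argument: identify $\dim(V^J)$ with the number of $J$-orbits, bound each orbit size $[J : J\cap \operatorname{Stab}_H(t)]$ by $\|J\|_{\mathcal{O}}$ using the restriction axiom, and sum over orbits. The differences are minor. You work with point stabilizers of an arbitrary admissible $T$ directly, so you need no conjugation axiom, which the paper invokes after reducing to transitive $H/K$. You also explicitly check that $J_{\mathcal{O}}$ lies in every admissible subgroup of $J$, a fact the paper uses implicitly.
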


\begin{proof}
By additivity over disjoint unions of $H$-orbits, it suffices to prove this for transitive admissible representations $V = \mathbb{R}[H/K]$ where $K \subseteq_{\mathcal{O}} H$. The dimension $\dim(V^J)$ is precisely the number of $J$-orbits on the coset space $H/K$, which corresponds to the number of double cosets $|J \backslash H / K|$.

Consider a generic orbit corresponding to the double coset $JxK$. The stabilizer in $J$ of the coset $xK \in H/K$ is $L = J \cap xKx^{-1}$.
By the conjugation axiom, $xKx^{-1} \subseteq_{\mathcal{O}} xHx^{-1} = H$. By the restriction axiom applied to $J \le H$, we obtain $L \subseteq_{\mathcal{O}} J$. Because $L$ is $\mathcal{O}$-admissible in $J$, its index satisfies $[J : L] \le [J : J_{\mathcal{O}}] = \| J \|_{\mathcal{O}}$.

The size of this generic $J$-orbit on $H/K$ is $[J:L]$, which is bounded above by $\| J \|_{\mathcal{O}}$. Partitioning the elements of $H/K$ into these orbits yields:
\[ \dim(V) = [H : K] = \sum_{\text{orbits}} [J : L] \le \sum_{\text{orbits}} \| J \|_{\mathcal{O}} = \dim(V^J) \cdot \| J \|_{\mathcal{O}}. \]
Dividing by $\| J \|_{\mathcal{O}}$ yields the desired inequality.
\end{proof}

\section*{Proof of Necessity}

Assume $X \in \Sigma_{\ge n}^{\mathcal{O}}$. Since the geometric fixed point functor $\Phi^J$ is exact and preserves arbitrary wedges and homotopy colimits, it suffices to verify the connectivity condition on the generators $Y = G_+ \wedge_H S^V$ with $\dim_{\mathbb{R}}(V) \ge n$.

By the geometric double coset Mackey formula:
\[ \Phi^J(Y) \simeq \Phi^J(G_+ \wedge_H S^V) \simeq \bigvee_{x \in J \backslash G / H, \; J \le x H x^{-1}} S^{(xV)^J}. \]
For each double coset $x$ contributing to the wedge, define $J' = x^{-1} J x \le H$. The space $(xV)^J$ is naturally isomorphic to $V^{J'}$. By Lemma \ref{lem:dim_bound}, we have $\dim(V^{J'}) \ge \dim(V) / \| J' \|_{\mathcal{O}}$.

Because the transfer system $\mathcal{O}$ is conjugation-invariant, conjugation by $x$ induces a poset isomorphism between the $\mathcal{O}$-admissible subgroups of $J'$ and those of $J$, giving $\| J' \|_{\mathcal{O}} = \| J \|_{\mathcal{O}}$. Thus, $\dim(V^{J'}) \ge n / \| J \|_{\mathcal{O}}$. Because dimensions of real representations are integers, we obtain the bound $\dim(V^{J'}) \ge \lceil n / \| J \|_{\mathcal{O}} \rceil$.

Therefore, $\Phi^J(Y)$ is a wedge of non-equivariant spheres of dimension at least $\lceil n / \| J \|_{\mathcal{O}} \rceil$, implying $\Phi^J(Y) \in \mathrm{Sp}_{\ge \lceil n / \| J \|_{\mathcal{O}} \rceil}$. Because the subcategory $\mathrm{Sp}_{\ge \lceil n / \| J \|_{\mathcal{O}} \rceil}$ is closed under wedges, extensions, and homotopy colimits, and $\Phi^J$ preserves these operations, this connectivity bound necessarily extends from the generators $Y$ to the generic spectrum $X$. This completes the proof of necessity.

\section*{Proof of Sufficiency}

We proceed by strong induction on the order of $G$. The base case $G=\{e\}$ is trivial. Assume the theorem holds for all proper subgroups of $G$. Let $X$ be a $G$-spectrum satisfying the fixed-point bounds for all $J \le G$.

Let $\mathcal{P}$ denote the family of proper subgroups of $G$. Consider the $G$-isotropy separation cofiber sequence:
\[ E\mathcal{P}_+ \wedge X \to X \to \tilde{E}\mathcal{P} \wedge X. \]
Since the subcategory $\Sigma_{\ge n}^{\mathcal{O}}$ is closed under extensions, it suffices to show that both outer terms belong to $\Sigma_{\ge n}^{\mathcal{O}}$.

\subsection*{Step 1: The term $E\mathcal{P}_+ \wedge X$}
The classifying space $E\mathcal{P}$ is a $G$-CW complex, and thus $E\mathcal{P}_+$ is equipped with a skeletal filtration where the cofibers of the successive skeletal inclusions $E\mathcal{P}_+^{(k-1)} \to E\mathcal{P}_+^{(k)}$ are finite wedges of cells of the form $(G/H)_+ \wedge S^k$ for proper subgroups $H \in \mathcal{P}$ and $k \ge 0$. Smashing this filtration with $X$, we deduce that $E\mathcal{P}_+ \wedge X$ is built via wedges, extensions, and homotopy colimits from spectra of the form $(G/H)_+ \wedge S^k \wedge X \simeq \Sigma^k (G_+ \wedge_H i_H^* X)$. Because $\Sigma_{\ge n}^{\mathcal{O}}$ is closed under homotopy colimits, it is closed under suspensions (as $\Sigma Z$ is the homotopy colimit of $\ast \leftarrow Z \rightarrow \ast$). Establishing that the unsuspended spectrum $G_+ \wedge_H i_H^* X \in \Sigma_{\ge n}^{\mathcal{O}}$ is strictly sufficient to guarantee that $E\mathcal{P}_+ \wedge X \in \Sigma_{\ge n}^{\mathcal{O}}$.

For any proper subgroup $H < G$, let $\mathcal{O}_H$ denote the transfer system restricted to subgroups of $H$. For any $J \le H$, we note $\Phi^J(i_H^* X) \simeq \Phi^J(X)$. By hypothesis, this spectrum is $\lceil n / \| J \|_{\mathcal{O}} \rceil$-connective. Because the minimal admissible subgroups in $\mathcal{O}_H$ structurally coincide with those in $\mathcal{O}$, we have $\| J \|_{\mathcal{O}_H} = \| J \|_{\mathcal{O}}$. Thus, $i_H^* X$ perfectly satisfies the connectivity hypothesis for $\mathcal{O}_H$. By the inductive hypothesis, $i_H^* X \in \Sigma_{\ge n}^{\mathcal{O}_H}$. 

The exact induction functor $G_+ \wedge_H (-)$ sends $\mathcal{O}_H$-admissible cells identically to $\mathcal{O}$-admissible cells. Specifically, for any $K \le H$ and any $\mathcal{O}_H$-admissible representation $W$ of $K$, we have $G_+ \wedge_H (H_+ \wedge_K S^W) \simeq G_+ \wedge_K S^W$. Because $W$ is $\mathcal{O}_H$-admissible, its stabilizers are $\mathcal{O}_H$-admissible in $K$, which natively implies they are $\mathcal{O}$-admissible. Therefore, $G_+ \wedge_H i_H^* X \in \Sigma_{\ge n}^{\mathcal{O}}$, which guarantees $E\mathcal{P}_+ \wedge X \in \Sigma_{\ge n}^{\mathcal{O}}$.

\subsection*{Step 2: The term $\tilde{E}\mathcal{P} \wedge X$}
By standard geometric localization (see, e.g., \cite[Lemma 2.2]{HillYarnall2018} or \cite[Section 2.5]{HHR2016}), $\tilde{E}\mathcal{P} \wedge X$ is a geometric $G$-spectrum naturally determined by its geometric $G$-fixed points, yielding the equivalence $\tilde{E}\mathcal{P} \wedge X \simeq \tilde{E}\mathcal{P} \wedge \operatorname{infl}(\Phi^G X)$, where $\operatorname{infl}$ denotes inflation from non-equivariant spectra.

By hypothesis, $\Phi^G X \in \mathrm{Sp}_{\ge m}$, where $m = \lceil n / \| G \|_{\mathcal{O}} \rceil$. Consequently, $\Phi^G X$ is generated (under wedges, extensions, and homotopy colimits) by non-equivariant spheres $S^k$ for $k \ge m$. Because these operations commute with inflation and smashing with $\tilde{E}\mathcal{P}$, it suffices to prove that $\tilde{E}\mathcal{P} \wedge \operatorname{infl}(S^k) \simeq \tilde{E}\mathcal{P} \wedge S^k \in \Sigma_{\ge n}^{\mathcal{O}}$ for all integers $k \ge m$.

Consider the regular admissible permutation representation $V = k \mathbb{R}[G/G_{\mathcal{O}}]$. Its dimension is $k [G : G_{\mathcal{O}}] = k \| G \|_{\mathcal{O}} \ge m \| G \|_{\mathcal{O}} \ge n$. Consequently, the representation sphere $S^V \in \Sigma_{\ge n}^{\mathcal{O}}$.

We apply the isotropy separation sequence to $S^V$:
\[ E\mathcal{P}_+ \wedge S^V \to S^V \to \tilde{E}\mathcal{P} \wedge S^V. \]
Observe that for any $J \le G$, $\Phi^J(S^V) \simeq S^{V^J}$. By Lemma \ref{lem:dim_bound}, for any proper subgroup $J < G$, we have $\dim(V^J) \ge \dim(V) / \| J \|_{\mathcal{O}} \ge n / \| J \|_{\mathcal{O}}$. Thus, $S^V$ perfectly satisfies the geometric fixed point bounds for all proper subgroups. Applying the exact same induction logic established in Step 1 to the spectrum $Y = S^V$, we deduce that $i_H^* S^V \in \Sigma_{\ge n}^{\mathcal{O}_H}$ via the inductive hypothesis, ensuring $E\mathcal{P}_+ \wedge S^V \in \Sigma_{\ge n}^{\mathcal{O}}$. As the subcategory $\Sigma_{\ge n}^{\mathcal{O}}$ is closed under homotopy colimits (and thus cofibers), we deduce from the sequence that $\tilde{E}\mathcal{P} \wedge S^V \in \Sigma_{\ge n}^{\mathcal{O}}$.

Finally, we identify the $G$-fixed points $V^G$. Because $G$ acts transitively on the single orbit $G/G_{\mathcal{O}}$, the $G$-fixed point subspace (the trivial subrepresentation) of $\mathbb{R}[G/G_{\mathcal{O}}]$ is $1$-dimensional. Thus, $\dim(V^G) = k \cdot 1 = k$.

Passing to the $\tilde{E}\mathcal{P}$-localization evaluates exactly to these fixed points:
\[ \tilde{E}\mathcal{P} \wedge S^V \simeq \tilde{E}\mathcal{P} \wedge \operatorname{infl}(\Phi^G S^V) \simeq \tilde{E}\mathcal{P} \wedge S^{V^G} \simeq \tilde{E}\mathcal{P} \wedge S^k. \]
This formally establishes $\tilde{E}\mathcal{P} \wedge S^k \in \Sigma_{\ge n}^{\mathcal{O}}$. By generation, $\tilde{E}\mathcal{P} \wedge X \in \Sigma_{\ge n}^{\mathcal{O}}$, closing the induction loop.

\end{solution}
\setcitestyle{authoryear,round}

\subsection{Problem 7}

\begin{problem}{FirstProof \# 7}
Suppose that $\Gamma$ is a uniform lattice in a real semi-simple group, and that $\Gamma$ contains some 2-torsion. Is it possible for $\Gamma$ to be the fundamental group of a compact manifold without boundary whose universal cover is acyclic over the rational numbers $\mathbb{Q}$?
\end{problem}

\begin{solution}{Model Response (Agent B)}
\textbf{Theorem.} \textit{Let $G$ be a real semi-simple Lie group, and let $\Gamma \subset G$ be a uniform lattice containing an element of order $2$. Then $\Gamma$ cannot be the fundamental group of a compact manifold without boundary whose universal cover is acyclic over the rational numbers $\mathbb{Q}$.}

\begin{proof}
The proof proceeds by contradiction. Assume there exists a compact, boundaryless manifold $M$ with fundamental group $\pi_1(M) \cong \Gamma$ whose universal cover $\tilde{M}$ is rationally acyclic. Let $K$ be a maximal compact subgroup of $G$, and let $X = G/K$ be the associated symmetric space. As a symmetric space of non-compact type, $X$ is a contractible, proper $\mathrm{CAT}(0)$ metric space. Let $d = \dim X$. By hypothesis, $\Gamma$ contains an element $\gamma$ of order $2$.

\vspace{1em}
\textbf{Step 1: Topological Setup and Equivariant Classifying Map}

By Selberg's Lemma, the finitely generated linear group $\Gamma$ contains a torsion-free normal subgroup of finite index, say $\Gamma_1$. To ensure orientability and connectedness, we refine this subgroup. Let $G^0$ be the connected identity component of $G$. Let $\Gamma_X^+ \subset \Gamma$ be the subgroup acting by orientation-preserving isometries on $X$, and let $\Gamma_M^+ \subset \Gamma$ be the subgroup acting by orientation-preserving deck transformations on $\tilde{M}$. Because orientations admit exactly two states, these subgroups have index at most $2$ in $\Gamma$. We define
\[
\Gamma_0 = \Gamma_1 \cap G^0 \cap \Gamma_X^+ \cap \Gamma_M^+.
\]
As a finite intersection of finite-index normal subgroups, $\Gamma_0$ is a torsion-free normal subgroup of $\Gamma$ of finite index. Crucially, $\Gamma_0 \subset G^0$, and it acts freely and orientation-preservingly on both $\tilde{M}$ and $X$.

The quotient spaces $M_0 = \tilde{M}/\Gamma_0$ and $X_0 = X/\Gamma_0$ are closed, orientable manifolds. Because $X$ is contractible and the $\Gamma_0$-action is free and cocompact, $X_0$ natively serves as a compact Eilenberg--MacLane classifying space $B\Gamma_0$.

To rigorously construct a classifying map that is strictly pointwise equivariant, we apply Bredon equivariant obstruction theory on the universal covers. The full group $\Gamma$ acts strictly freely on $\tilde{M}$ via deck transformations, making $\tilde{M}$ a free $\Gamma$-CW complex. Furthermore, $X$ is a contractible proper $\Gamma$-space. The geometric obstructions to constructing a $\Gamma$-equivariant map reside in the Bredon equivariant cohomology groups $H^k_{\Gamma}(\tilde{M}; \pi_{k-1}(X))$. Since $X$ is contractible, $\pi_{k-1}(X) = 0$ for all $k \geq 1$, meaning all obstructions identically vanish. Thus, there exists a strictly $\Gamma$-equivariant continuous map $\tilde{f}: \tilde{M} \to X$. Descending this map to the $\Gamma_0$-quotients yields a canonical classifying map $f_0: M_0 \to X_0$, which is strictly $\Gamma/\Gamma_0$-equivariant by geometric construction.

Because both $\tilde{M}$ and $X$ are $\mathbb{Q}$-acyclic, the Cartan--Leray spectral sequence collapses, guaranteeing that $f_0$ induces an isomorphism on rational homology. Consequently, $\dim M_0 = \dim X_0 = d$, and its proper topological mapping degree $D = \deg(f_0)$ is a strictly non-zero integer.

Because $\Gamma_0$ is a normal subgroup, the order-$2$ element $\gamma \notin \Gamma_0$ projects to a non-trivial involution $\bar{\gamma} \in \Gamma/\Gamma_0$ acting on both $M_0$ and $X_0$. Let $\epsilon_M, \epsilon_X \in \{\pm 1\}$ denote the orientation parity of $\bar{\gamma}$ on $M_0$ and $X_0$, respectively. Because $f_0$ is strictly $\mathbb{Z}/2$-equivariant with respect to $\bar{\gamma}$ (i.e., $f_0 \circ \bar{\gamma} = \bar{\gamma} \circ f_0$), evaluating the induced homology maps on the fundamental class $[M_0] \in H_d(M_0; \mathbb{Z})$ yields:
\[
f_{0*}(\bar{\gamma}_*[M_0]) = f_{0*}(\epsilon_M [M_0]) = \epsilon_M D [X_0],
\]
\[
\bar{\gamma}_*(f_{0*}[M_0]) = \bar{\gamma}_*(D [X_0]) = \epsilon_X D [X_0].
\]
This algebraically mandates $\epsilon_M D = \epsilon_X D$. Since $D \neq 0$, we immediately obtain $\epsilon_M = \epsilon_X$. Thus, $\bar{\gamma}$ either preserves the orientation of both manifolds or reverses the orientation of both manifolds.

\vspace{1em}
\textbf{Step 2: The Mapping Degree Parity Constraint via Atiyah--Borel Localization}

We now establish that the proper mapping degree $D$ must be an \textit{even integer}.

Because $X$ is a complete $\mathrm{CAT}(0)$ metric space, Cartan's Fixed-Point Theorem ensures that the finite group $\langle \gamma \rangle$ fixes a point in $X$. This equivariance trivially descends to a fixed point for the involution $\bar{\gamma}$ on the quotient; thus, the fixed-point set $X_0^{\bar{\gamma}} \neq \emptyset$. Conversely, suppose $\bar{\gamma}$ fixed a point $[y] \in M_0$. The fixed-point relation would natively lift to $\gamma \tilde{y} = g_0 \tilde{y}$ for some $g_0 \in \Gamma_0$, where $\tilde{y} \in \tilde{M}$ represents a valid chosen lift of $[y]$. The freeness of the $\Gamma$-action on $\tilde{M}$ mandates $\gamma = g_0 \in \Gamma_0$. Since $\gamma$ has order $2$ and $\Gamma_0$ is torsion-free, this is mathematically impossible. Thus, $M_0^{\bar{\gamma}} = \emptyset$.

Assume for contradiction that $D$ is an odd integer. We evaluate $\mathbb{Z}/2$-equivariant Borel cohomology with $\mathbb{F}_2$ coefficients. To maintain orientability of the Borel constructions, we select the dimension $N$ of the approximating sphere $S^N$ based on the orientation parity $\epsilon_X$:
\begin{itemize}
    \item If $\epsilon_X = 1$ (orientation-preserving), we choose $N$ to be an odd integer, ensuring the antipodal map on $S^N$ preserves orientation.
    \item If $\epsilon_X = -1$ (orientation-reversing), we choose $N$ to be an even integer, ensuring the antipodal map on $S^N$ reverses orientation.
\end{itemize}
In both cases, the diagonal $\mathbb{Z}/2$-action on the products $M_0 \times S^N$ and $X_0 \times S^N$ strictly preserves orientation. Thus, the Borel quotients $M_N = M_0 \times_{\mathbb{Z}/2} S^N$ and $X_N = X_0 \times_{\mathbb{Z}/2} S^N$ are strictly closed, orientable manifolds. 

The equivariant map $f_0$ induces a proper fiber bundle map $f_N: M_N \to X_N$ of degree $D$. The ordinary cohomological Gysin transfer unconditionally satisfies $(f_N)_! \circ f_N^*(x) = (D \bmod 2) \cdot x$. Because $D$ is odd, $D \equiv 1 \pmod 2$, meaning the composition acts as the identity on $H^*(X_N; \mathbb{F}_2)$. This algebraically forces the pullback $f_N^*$ to be a split injection. 

Taking the inverse limit as $N \to \infty$ (over the parity-appropriate sequence of spheres), the true equivariant pullback $f_{\mathbb{Z}/2}^*: H_{\mathbb{Z}/2}^*(X_0; \mathbb{F}_2) \to H_{\mathbb{Z}/2}^*(M_0; \mathbb{F}_2)$ is forced to be injective. By the exactness of localization, passing to the localized rings by inverting the polynomial generator $u \in H^1(B\mathbb{Z}/2; \mathbb{F}_2)$ preserves this injectivity:
\[
S^{-1}f_{\mathbb{Z}/2}^*: S^{-1}H_{\mathbb{Z}/2}^*(X_0; \mathbb{F}_2) \hookrightarrow S^{-1}H_{\mathbb{Z}/2}^*(M_0; \mathbb{F}_2).
\]
By the Atiyah--Borel Localization Theorem, the localized equivariant cohomology of a space is isomorphic to the localized equivariant cohomology of its fixed-point set: 
\[
S^{-1}H_{\mathbb{Z}/2}^*(Y; \mathbb{F}_2) \cong H^*(Y^{\bar{\gamma}}; \mathbb{F}_2) \otimes_{\mathbb{F}_2} \mathbb{F}_2[u, u^{-1}].
\]
Because $X_0^{\bar{\gamma}} \neq \emptyset$, the localized domain evaluates to a non-zero module. Because $M_0^{\bar{\gamma}} = \emptyset$, the localized target is identically zero. An exact injection from a non-zero module into a zero module is impossible. Therefore, $D$ must rigidly be an \textbf{even integer}.

\vspace{1em}
\textbf{Step 3: Higher Signature Rigidity via the Strong Novikov Conjecture}

We evaluate the symmetric higher signature of $M_0$ within the real $C^*$-algebra K-theory $KO_d(C^*_r(\Gamma_0; \mathbb{R}))$ to establish a fundamentally conflicting constraint.

Because $\tilde{M}$ and $X$ are $\mathbb{Q}$-acyclic, the Universal Coefficient Theorem guarantees they are $\mathbb{R}$-acyclic. Thus, their real cellular chain complexes $C_*(\tilde{M}; \mathbb{R})$ and $C_*(X; \mathbb{R})$ constitute finite-length projective free resolutions of the trivial module $\mathbb{R}$ over the real group ring $\mathbb{R}\Gamma_0$. By the Fundamental Lemma of Homological Algebra, the chain map $C_*(\tilde{f}; \mathbb{R})$ is inherently a strict chain homotopy equivalence over $\mathbb{R}\Gamma_0$.

The algebraic symmetric Poincar\'e duality structure of $M_0$ (defined by capping with $[M_0]$) pushes forward under this chain equivalence to capping with $f_{0*}[M_0] = D[X_0]$. Thus, the algebraic symmetric Poincar\'e complex defining the higher signature $\sigma(M_0)$ is strictly chain-equivalent to the complex of $X_0$ globally scaled by $D$.

Over any real $C^*$-algebra, scaling a non-degenerate symmetric Poincar\'e complex by a non-zero real scalar $D$ yields a complex isomorphic to one scaled merely by its sign. Specifically, we can algebraically conjugate the duality structure with the central, self-adjoint, invertible scalar $c = 1/\sqrt{|D|} \in \mathbb{R}$. This canonical isomorphism scales the symmetric form by exactly $c \cdot c^* \cdot D = c^2 \cdot D = D/|D| = \operatorname{sgn}(D)$. Therefore, the analytic symmetric signatures natively satisfy:
\[
\sigma(M_0) = \operatorname{sgn}(D) \sigma(X_0) \in KO_d(C^*_r(\Gamma_0; \mathbb{R})).
\]

Because $\Gamma_0 \subset G^0$ is a discrete uniform lattice in a connected Lie group, Kasparov's foundational theorem (G. Kasparov, ``Equivariant KK-theory and the Novikov conjecture'', \textit{Inventiones Mathematicae} 91 (1988), 147--201) establishing the Strong Novikov Conjecture for connected Lie groups guarantees that the rational assembly map
\[
\mu: \bigoplus_k H_{d-4k}(B\Gamma_0; \mathbb{Q}) \to KO_d(C^*_r(\Gamma_0; \mathbb{R})) \otimes \mathbb{Q}
\]
is injective. 

The analytic symmetric signature fundamentally corresponds to the assembly of the Poincar\'e dual of the Hirzebruch $L$-class. Explicitly incorporating the functorial pushforward of the homology class via the classifying map $f_0$, we have:
\[
\sigma(M_0) = \mu(f_{0*}(L(M_0) \cap [M_0])) \quad \text{and} \quad \sigma(X_0) = \mu(L(X_0) \cap [X_0]).
\]
Because the assembly map $\mu$ is rationally injective, the strict equality of K-theoretic signatures forces strict equality of the underlying rationally assembled homology classes in $H_*(X_0; \mathbb{Q})$:
\[
f_{0*} (L(M_0) \cap [M_0]) = \operatorname{sgn}(D) (L(X_0) \cap [X_0]).
\]

We evaluate this rigid homological equality strictly at the top topological degree $d$. Because the degree-$0$ component of the Hirzebruch $L$-class is identically $L_0 = 1 \in H^0(Y; \mathbb{Q})$, the top-dimensional component of the mixed-degree class $L(Y) \cap [Y]$ evaluates exactly to the fundamental class $[Y] \in H_d(Y; \mathbb{Q})$. Restricting the graded equality exclusively to $H_d(X_0; \mathbb{Q})$ yields:
\[
f_{0*}[M_0] = \operatorname{sgn}(D) [X_0].
\]

However, by the geometric definition of the proper mapping degree, $f_{0*}[M_0] = D [X_0]$. Equating the two algebraic coefficients rigorously enforces:
\[
D = \operatorname{sgn}(D).
\]
Because $D$ is a strictly non-zero integer, this analytic rigidity relation exclusively bounds the proper mapping degree to exactly \textbf{$D = 1$ or $D = -1$}. 

\vspace{1em}
\textbf{Step 4: Final Contradiction}

In Step 2, the generalized Atiyah--Borel equivariant parity constraint definitively established that the proper mapping degree $D$ must be an \textit{even integer} to structurally sustain the free $\mathbb{Z}/2$-action without fixed points. 

Simultaneously, in Step 3, analytic higher signature rigidity (via the Strong Novikov Conjecture) established that $D$ must equal precisely $\pm 1$ to preserve the acyclic chain equivalence over the real group $C^*$-algebra. 

An integer cannot be simultaneously strictly even and equal to $\pm 1$. This terminal geometric and algebraic contradiction mathematically proves that our initial assumption was fundamentally false. Therefore, it is definitively impossible for such a manifold $M$ to exist.
\end{proof}

\end{solution}

\begin{remark}
The ``Atiyah--Borel Localization Theorem'' described by the model appears as Theorem 2.1 of \citep{DW88}, and should be credited differently. 
\end{remark}

\subsection{Problem 8}

\begin{problem}{FirstProof \# 8}A polyhedral Lagrangian surface $K$ in $\bR^4$ is a finite polyhedral complex all of whose faces are Lagrangians, and which is a topological submanifold of $\bR^4$. A Lagrangian smoothing of $K$ is a Hamiltonian isotopy $K_t$ of smooth Lagrangian submanifolds, parameterised by $(0,1]$, extending to a topological isotopy, parametrised by $[0,1]$, with endpoint $K_0 = K$.

Let $K$ be a polyhedral Lagrangian surface with the property that exactly $4$ faces meet at every vertex. Does $K$ necessarily have a Lagrangian smoothing? 
\end{problem}

\begin{remark}
An expert wrote the following (paraphrased) summary of the argument. 

\begin{quote}The general strategy is as follows. In Steps 1 and 2, the solution first performs a local smoothing near the vertices, relying on a nice linear algebra observation. In Step 3, the solution then argues that one can extend this smoothing along the edges. To do this, it makes a choice of linear coordinates $(s,y, x_2, y_2)$ near each edge. It then interpolates between the perturbations in this coordinate chart. To perform the interpolation, the agent invokes Lagrangian suspension, which is overkill but fine. In Step 4, the solution is presumably saying that as you send the size of vertex-smoothings to zero, you can correspondingly send the size of the edge perturbations to zero. The explanation for the flux being zero is sketchy to say the least. 
\end{quote}
\end{remark}

\begin{solution}{Model Response (Agent B)}

\textbf{Theorem.} \textit{Let $K \subset \mathbb{R}^4$ be a polyhedral Lagrangian surface such that exactly four faces meet at every vertex. Then $K$ necessarily admits a Lagrangian smoothing.}

\begin{proof}
The proof relies on the rigid symplectic geometry imposed by the $4$-valent condition. We demonstrate that the tangent cone at any non-flat vertex canonically factors into an orthogonal product involving lower-dimensional corners. This algebraic factorization enables the construction of strictly exact (zero-flux) local smoothings at the vertices. We then globally assemble these smoothings by interpolating them along the singular edges using exact Lagrangian suspensions (cobordisms). Because all local modifications are engineered to enclose strictly zero relative symplectic area, the resulting globally smooth surface accumulates identically zero symplectic flux, which rigorously guarantees its extension to a global ambient Hamiltonian isotopy without any topological obstructions.

\textbf{Step 1: Canonical Symplectic Splitting at the Vertices}

Let $v \in K$ be an arbitrary vertex. Without loss of generality, translate $v$ to the origin. Since exactly four faces meet at $v$ and $K$ is an embedded topological surface, the local tangent cone $C_v$ consists of four $2$-dimensional planar sectors meeting at the origin in a continuous cycle. Let the outgoing boundary rays of the edges be generated by non-zero tangent vectors $r_1, r_2, r_3, r_4$ in cyclic order. The bounding rays are thus $R_i = \mathbb{R}_{\ge 0} r_i$, and the faces are modeled by the sectors $F_i = \mathrm{span}_{\ge 0}(r_i, r_{i+1})$ for $i \in \{1, 2, 3, 4\}$ (indices modulo $4$).

Because $K$ is a Lagrangian complex, the standard symplectic form $\omega$ on $\mathbb{R}^4$ vanishes identically on each sector $F_i$. This implies that adjacent boundary tangent vectors are mutually $\omega$-orthogonal:
\[ \omega(r_1, r_2) = \omega(r_2, r_3) = \omega(r_3, r_4) = \omega(r_4, r_1) = 0. \]
Let $V = \mathrm{span}(r_1, r_2, r_3, r_4)$ be the vector space spanned by the tangent cone. We classify the local geometry of $C_v$ based on the dimension of $V$:

\textit{Case 1: $\dim V = 4$ (Strict Vertex).}
Define the $2$-dimensional planes $P_{13} = \mathrm{span}(r_1, r_3)$ and $P_{24} = \mathrm{span}(r_2, r_4)$. The plane $P_{13}$ cannot be isotropic; if it were, $r_1$ and $r_3$ would be mutually $\omega$-orthogonal. Combined with the incidence orthogonality inherited from the faces, $r_1$ would be $\omega$-orthogonal to $r_1, r_2, r_3$, and $r_4$. Since these vectors span all of $V = \mathbb{R}^4$, $r_1$ would be $\omega$-orthogonal to the entirety of $\mathbb{R}^4$. By the non-degeneracy of $\omega$, this forces $r_1 = 0$, a contradiction. Thus, $P_{13}$ is a strictly symplectic $2$-plane. 

By the incidence relations, every vector in $P_{24}$ is $\omega$-orthogonal to every vector in $P_{13}$, meaning $P_{24} \subseteq P_{13}^\omega$. Since $P_{13}$ is a symplectic plane, its symplectic orthogonal complement $P_{13}^\omega$ is also a $2$-dimensional symplectic plane. Because $\dim P_{24} = 2$ (if the generating vectors were collinear, $\dim V$ would drop to $\le 3$), it follows identically that $P_{24} = P_{13}^\omega$. This yields an orthogonal symplectic direct sum $\mathbb{R}^4 = P_{13} \oplus P_{24}$. Geometrically, the tangent cone strictly factors into a Cartesian product of two $1$-dimensional corners: 
\[ C_v = C_{13} \times C_{24} \subset P_{13} \oplus P_{24}, \quad \text{where } C_{13} = R_1 \cup R_3 \text{ and } C_{24} = R_2 \cup R_4. \]

\textit{Case 2: $\dim V = 3$ (Crease Vertex).}
The restriction $\omega|_V$ on the $3$-dimensional space $V$ has rank $2$ and must therefore possess an exactly $1$-dimensional radical $L$. The four adjacent plane spans $S_i = \mathrm{span}(r_i, r_{i+1})$ are maximal isotropic subspaces within the presymplectic space $V$. Because $L$ is the radical, any maximal isotropic subspace must contain $L$; thus, $L \subset S_i$ for all $i$.

Since $\dim V = 3$, the adjacent plane spans cannot all be equal. By cyclic symmetry, we may assume without loss of generality that $S_1 \neq S_2$. Since both are $2$-dimensional planes in a $3$-dimensional space, their intersection is exactly $1$-dimensional. Because $L \subset S_1$ and $L \subset S_2$, this intersection must be exactly $L$. However, the shared boundary tangent vector $r_2$ lies in $S_1 \cap S_2$, which strictly forces $L = \mathrm{span}(r_2)$. Because the sector $F_2$ is a valid, non-degenerate $2$-dimensional cone, its boundary vectors $r_2$ and $r_3$ are linearly independent. Thus, $r_3$ cannot span $L$. This immediately implies that $S_2$ and $S_3$ cannot be distinct (otherwise $L = \mathrm{span}(r_3)$ by identical logic). Thus $S_2 = S_3$. 

Similarly, $r_1$ cannot span $L$, strictly forcing $S_4 = S_1$. Therefore, the plane spans coincide in adjacent pairs. Since $S_1 \neq S_3$ (otherwise all generating vectors would be coplanar and $\dim V = 2$), their single intersection $S_1 \cap S_3$ contains both $r_2$ and $r_4$, yielding exactly $L = \mathrm{span}(r_2) = \mathrm{span}(r_4)$. Because the rays $R_2$ and $R_4$ bound non-overlapping, valid topological sectors, they must be opposite rays ($r_4 = -c r_2$ for some $c>0$) spanning the singular line $L$. The adjacent sectors merge into two flat half-planes meeting along $L$. Geometrically, the tangent cone $C_v$ factors into a Cartesian product $L \times C^\pitchfork$, where $C^\pitchfork$ is a $1$-dimensional corner in the $2$-dimensional symplectic quotient space $V/L$.

\textit{Case 3: $\dim V = 2$ (Flat Vertex).}
If $\dim V = 2$, $V$ is a $2$-dimensional Lagrangian plane (since it is spanned by isotropic sectors). The standard symplectic form $\omega$ vanishes identically on $V$. The four generating tangent vectors lie in $V$ in cyclic order. Because exactly four faces meet at $v$ and $K$ forms a topological surface, the four convex sectors perfectly tile a neighborhood of the origin in $V$ without any gaps or overlaps. Therefore, the tangent cone $C_v$ is exactly the completely flat plane $V$ itself, meaning the vertex is inherently smooth and requires no local modification.

\textbf{Step 2: Exact Local Smoothing of the Vertices}

We define an \textit{exact} smooth local modification $\Sigma_v$ for each type of vertex $v$:

\textit{Strict vertex ($\dim V = 4$):}
We resolve the corners $C_{13} \subset P_{13}$ and $C_{24} \subset P_{24}$ independently. In $P_{13}$, we select a smooth, embedded $1$-dimensional curve $\gamma_{13}$ that rounds the corner $C_{13}$ and strictly coincides with the rays $R_1, R_3$ outside a compact ball of radius $R$. Crucially, to ensure that the local vertex modifications do not overlap along the edges, we explicitly require $R < \frac{1}{2} \min_{E} L_E$, where the minimum is taken over all edge lengths $L_E$ in $K$. We require this smoothing to be \textit{exact}: the signed symplectic area enclosed between $\gamma_{13}$ and $C_{13}$ is identically zero (achieved by allowing $\gamma_{13}$ to smoothly dip slightly outside the sector's bounds to balance the removed positive area). We symmetrically choose an exact smoothing $\gamma_{24} \subset P_{24}$ under the identical radius bound $R$. Because $P_{13}$ and $P_{24}$ are symplectically orthogonal, their Cartesian product $\Sigma_v = \gamma_{13} \times \gamma_{24}$ is a smooth, exact Lagrangian surface that locally resolves $C_v$.

\textit{Crease vertex ($\dim V = 3$):}
The tangent cone is $C_v = L \times C^\pitchfork$. We choose a smooth, exact $1$-dimensional curve $\gamma^\pitchfork \subset V/L$ that rounds the corner $C^\pitchfork$, subject to the strict upper bound on the modification radius $R$. We define the smoothing as $\Sigma_v = L \times \gamma^\pitchfork$. Because $L$ is the radical of $\omega|_V$, $\Sigma_v$ is an isotropic surface; being $2$-dimensional, it is a smooth, exact Lagrangian plane.

\textit{Flat vertex ($\dim V = 2$):}
Because $C_v = V$ is a smooth plane, we trivially set $\Sigma_v = V$, which is inherently exact.

\textbf{Step 3: Edge Interpolation via Lagrangian Suspension}

We now interpolate the exact local vertex smoothings along the edges of $K$. If the two faces meeting at an edge $E$ are coplanar, the surface is a locally flat plane along $E$ and requires no interpolation. We therefore restrict attention to singular edges $E$ of length $L_E$ connecting vertices $v_0$ and $v_1$. The $2$-dimensional linear spans of the two non-coplanar flat faces meeting at $E$, denoted $\mathrm{span}(F_L)$ and $\mathrm{span}(F_R)$, define a constant $3$-dimensional coisotropic subspace $Y_E = \mathrm{span}(F_L) + \mathrm{span}(F_R)$.

Because $\mathrm{span}(F_L)$ and $\mathrm{span}(F_R)$ are Lagrangian planes, their symplectic orthogonals satisfy $\mathrm{span}(F_L)^\omega = \mathrm{span}(F_L)$ and $\mathrm{span}(F_R)^\omega = \mathrm{span}(F_R)$. Consequently, the symplectic orthogonal complement of $Y_E$ is exactly $Y_E^\omega = (\mathrm{span}(F_L) + \mathrm{span}(F_R))^\omega = \mathrm{span}(F_L) \cap \mathrm{span}(F_R) = \mathrm{span}(E)$. The symplectic quotient $W_E = Y_E / \mathrm{span}(E)$ is a $2$-dimensional symplectic plane. The geometric projection of the subsets $F_L \cup F_R$ into $W_E$ forms a fixed $1$-dimensional corner $C_E$.

Outside the immediate vertex neighborhoods, the local exact smoothings $\Sigma_{v_0}$ and $\Sigma_{v_1}$ seamlessly restrict along $E$ to products over transverse curves $\Gamma_0, \Gamma_1 \subset W_E$ that smooth $C_E$. Because the local models were constructed to be exact, both $\Gamma_0$ and $\Gamma_1$ bound identically zero symplectic area with $C_E$, and thus zero algebraic area with each other. By the area-preserving mapping theorem (Moser's trick) on the plane $W_E$, there exists a compactly supported, time-dependent Hamiltonian $H_s : W_E \to \mathbb{R}$ for $s \in [0, L_E]$ whose exact flow $\Phi_s$ smoothly isotopes $\Gamma_0$ to $\Gamma_1$ (such that $\Phi_{L_E}(\Gamma_0) = \Gamma_1$), with $H_s \equiv 0$ in small neighborhoods of the endpoints $s=0$ and $s=L_E$.

We construct the interpolation surface $\Sigma_E$ along the edge via an exact Lagrangian suspension. Because $E$ is a straight segment, we can establish global linear Darboux coordinates $(s, y, x_2, y_2)$ adapted to $E$ such that $s \in [0, L_E]$ parameterizes the edge $E$, $(x_2, y_2)$ are canonical Darboux coordinates for the symplectic slice $W_E$, and $y$ is the conjugate normal momentum. Specifically, the coordinate vector field $\partial_y$ is strictly $\omega$-orthogonal to $W_E$ and normalized so that $\omega(\partial_s, \partial_y) = 1$. The unperturbed coisotropic subspace $Y_E$ corresponds precisely to the hyperplane $\{y=0\}$.

In these coordinates, the ambient symplectic form evaluates to $\omega = ds \wedge dy + \omega_{W_E}$. We define the suspended surface dynamically:
\[ \Sigma_E = \Big\{ \Big(s, \,\, -H_s(\Phi_s(q)), \,\, \Phi_s(q) \Big) \;\Big|\; s \in [0, L_E], \; q \in \Gamma_0 \Big\}. \]
To verify that $\Sigma_E$ is Lagrangian, we pull back the symplectic form via the parameterization map $F(s, q) = (s, -H_s(\Phi_s(q)), \Phi_s(q))$. The differential of the $y$-coordinate yields $dy = -d_q(H_s \circ \Phi_s) - \frac{\partial (H_s \circ \Phi_s)}{\partial s} ds$. Wedging with $ds$ eliminates the purely temporal term:
\[ F^*(ds \wedge dy) = -ds \wedge d_q(H_s \circ \Phi_s). \]
Evaluating the pullback of $\omega_{W_E}$ on tangent vectors $\partial_s$ and $v \in T_q \Gamma_0$, we apply the defining relation of the Hamiltonian vector field $\iota_{X_{H_s}} \omega_{W_E} = dH_s$:
% \[ (F^* \omega_{W_E})(\partial_s, v) = \omega_{W_E}(\partial_s \Phi_s, d_q \Phi_s(v)) = \omega_{W_E}((X_{H_s})_{\Phi_s(q)}, d_q \Phi_s(v)) = (dH_s)_{\Phi_s(q)}(d_q \Phi_s(v)) = d_q(H_s \circ \Phi_s)(v). \]
\begin{align*}
(F^* \omega_{W_E})(\partial_s, v) &= \omega_{W_E}(\partial_s \Phi_s, d_q \Phi_s(v)) \\
&= \omega_{W_E}((X_{H_s})_{\Phi_s(q)}, d_q \Phi_s(v)) \\
&= (dH_s)_{\Phi_s(q)}(d_q \Phi_s(v)) \\
&= d_q(H_s \circ \Phi_s)(v)
\end{align*}
Because $\Gamma_0$ is a $1$-dimensional curve, the restriction of $\omega_{W_E}$ to $\Gamma_0$ evaluates to identically zero. The full pullback is thus exactly $F^*\omega_{W_E} = ds \wedge d_q(H_s \circ \Phi_s)$. Summing these contributions yields perfect cancellation via the chain rule:
\[ F^*\omega = -ds \wedge d_q(H_s \circ \Phi_s) + ds \wedge d_q(H_s \circ \Phi_s) = 0. \]
Thus, $\Sigma_E$ is strictly Lagrangian. Furthermore, because $H_s \equiv 0$ near $s=0$ and $s=L_E$, the normal displacement $y = -H_s(\Phi_s(q))$ vanishes identically there. This ensures that $\Sigma_E$ resides strictly inside $Y_E$ at its endpoints, flawlessly merging with the exact local planar bounds of the vertex models $\Sigma_{v_0}$ and $\Sigma_{v_1}$.

\textbf{Step 4: Global Hamiltonian Isotopy}

By uniformly patching the exact vertex smoothings $\Sigma_v$ and the exact edge suspensions $\Sigma_E$, we obtain a globally smooth, embedded Lagrangian surface $K_1$. 

To construct the required continuous topological isotopy $K_t$ for $t \in (0, 1]$ extending continuously to $K_0 = K$, we linearly rescale the exact local modifications. The vertex smoothings uniformly shrink toward the original corners via the spatial scaling $t\Sigma_v$. For the edge interpolations, the scaled transverse curves $t\Gamma_0$ and $t\Gamma_1$ are isotoped by the rescaled Hamiltonian $H_s^{(t)}(q) = t^2 H_s(q/t)$, generating the exact scaled flow $\Phi_s^{(t)}(q) = t \Phi_s(q/t)$. The normal Darboux displacement within the suspension is given by $y = -H_s^{(t)} \circ \Phi_s^{(t)}$, which scales strictly as $\mathcal{O}(t^2)$.

As $t \to 0$, the transverse smoothing limits uniformly scale as $\mathcal{O}(t)$ while the normal $y$-displacements scale as $\mathcal{O}(t^2)$, ensuring that both strictly vanish in the limit. The continuous family of smooth Lagrangian submanifolds $K_t$ ($t>0$) thereby strictly converges in $C^0$ to the initial topological complex $K_0 = K$.

Finally, because every local geometric modification—vertex roundings and edge suspensions—was rigorously constructed as an exact Lagrangian cobordism relative to the unperturbed flat faces, the global symplectic flux is strictly zero. By Weinstein's Lagrangian Neighborhood Theorem, this exact regular homotopy of smooth Lagrangian embeddings $K_t$ lifts to a compactly supported, global ambient Hamiltonian isotopy. Therefore, $K_t$ constitutes a genuine Lagrangian smoothing of $K$.
\end{proof}
\end{solution}

\subsection{Problem 9}
\begin{problem}{FirstProof \# 9}Let $n \geq 5$.  
Let $A^{(1)}, \ldots, A^{(n)} \in \mathbb{R}^{3 \times 4}$ be Zariski-generic.   
For $\alpha, \beta, \gamma, \delta \in [n]$, construct $Q^{(\alpha \beta \gamma \delta)} \in \mathbb{R}^{3 \times 3 \times 3 \times 3}$ so that its $(i, j, k, \ell)$ entry for $1 \leq i, j, k, \ell \leq 3$ is given by $Q^{(\alpha \beta \gamma \delta)}_{i j k \ell} = \det [A^{(\alpha)}(i, :); A^{(\beta)}(j, :); A^{(\gamma)}(k, :); A^{(\delta)}(\ell, :)]$.
Here $A(i, :)$ denotes the $i$th row of a matrix $A$, and semicolon denotes vertical concatenation. 
We are interested in algebraic relations on the set of tensors $\{Q^{(\alpha \beta \gamma \delta)} : \alpha, \beta, \gamma, \delta \in [n] \}$.

More precisely, does there exist a polynomial map $\mathbf{F}: \mathbb{R}^{81n^4} \rightarrow \mathbb{R}^N$ that satisfies the following three properties?
\smallskip
\begin{itemize}\setlength\itemsep{0.5em}
\item The map $\mathbf{F}$ does not depend on $A^{(1)}, \ldots A^{(n)}$. 
\item The degrees of the coordinate functions of $\mathbf{F}$ do not depend on $n$.
\item Let $\lambda \in \mathbb{R}^{n \times n \times n \times n}$ satisfy 
$\lambda_{\alpha \beta \gamma \delta} \neq 0$ for precisely $\alpha, \beta, \gamma, \delta \in [n]$ that are not identical.  Then $\mathbf{F}(\lambda_{\alpha \beta \gamma \delta} Q^{(\alpha \beta \gamma \delta)} : \alpha, \beta, \gamma, \delta \in [n]) = 0$ holds if and only if there exist $u, v, w, x \in (\mathbb{R}^*)^n$ such that $\lambda_{\alpha \beta \gamma \delta} = u_{\alpha} v_{\beta} w_{\gamma} x_{\delta}$ for all $\alpha, \beta, \gamma, \delta \in [n]$ that are not identical. 
\end{itemize}
\end{problem}

\begin{solution}{Model Response (Agent A$\mathfrak{f}$)}

\section*{Construction of the Polynomial Map $\mathbf{F}$}

Let $A^{(1)}, \ldots, A^{(n)} \in \mathbb{R}^{3 \times 4}$ be Zariski-generic matrices. We identify the formal input space $\mathbb{R}^{81n^4}$ with the vector space of 4-way tensors $\mathbf{X} \in \mathbb{R}^{3n \times 3n \times 3n \times 3n}$. We index the components of this tensor via composite multi-indices $I = (\alpha, i)$, $J = (\beta, j)$, $K = (\gamma, k)$, and $L = (\delta, \ell)$, where $\alpha, \beta, \gamma, \delta \in [n]$ specify the choice of matrices and $i, j, k, \ell \in \{1, 2, 3\}$ specify their localized row indices. Evaluated over the parameters of the problem, the tensor coordinates map to:
\[
\mathbf{X}_{I J K L} = \lambda_{\alpha \beta \gamma \delta} Q^{(\alpha \beta \gamma \delta)}_{i j k \ell} = \lambda_{\alpha \beta \gamma \delta} \det \begin{bmatrix} A^{(\alpha)}(i, :) \\ A^{(\beta)}(j, :) \\ A^{(\gamma)}(k, :) \\ A^{(\delta)}(\ell, :) \end{bmatrix}.
\]

\begin{remark} \label{rem:identical_tuples}
We say a parameter tuple $(\alpha, \beta, \gamma, \delta)$ is \emph{monochromatic} if $\alpha = \beta = \gamma = \delta$, and \emph{non-monochromatic} otherwise. For any $\alpha \in [n]$, the array $Q^{(\alpha \alpha \alpha \alpha)}$ evaluates the determinant of a $4 \times 4$ matrix whose four rows are drawn entirely from the same $3 \times 4$ matrix $A^{(\alpha)}$. By the Pigeonhole Principle, at least two of these row choices must evaluate to the identical row of $A^{(\alpha)}$, making the determinant unconditionally vanish. Thus, $Q^{(\alpha \alpha \alpha \alpha)} = 0$ universally. 

Consequently, $\mathbf{X}_{I J K L} = 0$ unconditionally for all monochromatic tuples. The problem dictates that $\lambda_{\alpha \beta \gamma \delta} \neq 0$ solely across non-monochromatic configurations; nevertheless, extending the scalar parameters $\lambda_{\alpha \alpha \alpha \alpha}$ arbitrarily over the monochromatic bounds leaves the evaluated tensor $\mathbf{X}$ completely unaltered.
\end{remark}

We define four principal multilinear matrix flattenings of $\mathbf{X}$, each mapping naturally to a structured matrix of dimensions $3n \times 27n^3$:
\begin{itemize}\setlength\itemsep{0.2em}
    \item $M^{(1)}$: Rows indexed by $I$, columns by $C_1 = (J, K, L)$.
    \item $M^{(2)}$: Rows indexed by $J$, columns by $C_2 = (I, K, L)$.
    \item $M^{(3)}$: Rows indexed by $K$, columns by $C_3 = (I, J, L)$.
    \item $M^{(4)}$: Rows indexed by $L$, columns by $C_4 = (I, J, K)$.
\end{itemize}

\begin{definition}
We define the polynomial map $\mathbf{F}: \mathbb{R}^{81n^4} \to \mathbb{R}^N$, where $N = 4 \binom{3n}{5} \binom{27n^3}{5}$, such that its coordinate functions evaluate all $5 \times 5$ minors across the four flattenings $M^{(1)}, M^{(2)}, M^{(3)}$, and $M^{(4)}$.
\end{definition}

This multilinear representation immediately secures the problem's first two requisite properties:
\begin{itemize}
    \item \textbf{Property 1:} The coordinate functions of $\mathbf{F}$ are standard determinantal minor expansions evaluated strictly over the formal tensor variables $\mathbf{X}_{I J K L}$. Their coefficients consist exclusively of the constants $\pm 1$ and $0$. Thus, the polynomial map $\mathbf{F}$ operates completely independently of the underlying generic matrices $A^{(1)}, \ldots, A^{(n)}$.
    \item \textbf{Property 2:} Each coordinate function extracts a $5 \times 5$ minor, rigorously defining it as a homogeneous polynomial of exact degree $5$ over the tensor inputs. This uniform degree is invariant and strictly independent of $n$.
\end{itemize}

\section*{Proof of Property 3: Sufficiency}

Assume there exist scalar vectors $u, v, w, x \in (\mathbb{R}^*)^n$ such that $\lambda_{\alpha \beta \gamma \delta} = u_\alpha v_\beta w_\gamma x_\delta$ holds across all valid non-monochromatic configurations. By Remark \ref{rem:identical_tuples}, since $Q^{(\alpha \alpha \alpha \alpha)} = 0$, applying the identically factored substitution $\lambda_{\alpha \alpha \alpha \alpha} = u_\alpha v_\alpha w_\alpha x_\alpha$ over the excluded monochromatic bounds leaves $\mathbf{X}$ perfectly unaltered. Absorbing these parameters via the multilinearity of the determinant globally yields:
\[
\mathbf{X}_{I J K L} = \det \begin{bmatrix} u_\alpha A^{(\alpha)}(i, :) \\ v_\beta A^{(\beta)}(j, :) \\ w_\gamma A^{(\gamma)}(k, :) \\ x_\delta A^{(\delta)}(\ell, :) \end{bmatrix}.
\]
For the first flattening $M^{(1)}$, let the localized row vector $U_{I} = u_\alpha A^{(\alpha)}(i, :) \in \mathbb{R}^4$. Expanding the determinant via Laplace expansion along this leading row extracts:
\[
M^{(1)}_{I, C_1} = \sum_{m=1}^4 (U_{I})_m \cdot \operatorname{cofactor}_{1,m} \begin{bmatrix} U_I \\ v_\beta A^{(\beta)}(j, :) \\ w_\gamma A^{(\gamma)}(k, :) \\ x_\delta A^{(\delta)}(\ell, :) \end{bmatrix}.
\]
The four scalar cofactor terms intrinsically evaluate using exclusively the column configuration $C_1$ and remain completely decoupled from the localized row index $I$. Hence, $M^{(1)}$ structurally factors into the matrix product of a $3n \times 4$ matrix and a $4 \times 27n^3$ matrix. This mathematically guarantees $\operatorname{rank}(M^{(1)}) \le 4$, geometrically forcing all of its $5 \times 5$ minors to evaluate to zero. Symmetric parity across the exterior maps subsequently ensures $\operatorname{rank}(M^{(m)}) \le 4$ for all flattenings $m \in \{1,2,3,4\}$, unconditionally verifying $\mathbf{F}(\mathbf{X}) = \mathbf{0}$.

\section*{Proof of Property 3: Necessity}

Assume $\mathbf{F}(\mathbf{X}) = \mathbf{0}$. The universal vanishing of all $5 \times 5$ minors strictly bounds the rank identically as $\operatorname{rank}(M^{(m)}) \le 4$ across all four principal flattenings.

\subsection*{Subspace Intersections and the Evaluation Map}

Let $S \subset \mathbb{R}^{27n^3}$ be the row space of $M^{(1)}$, which inherently satisfies $\dim S \le 4$. Let $U_\alpha = \operatorname{rowspan}(A^{(\alpha)}) \subset \mathbb{R}^4$ denote the generic 3-dimensional row space of matrix $A^{(\alpha)}$. We define a linear evaluation map $T_\alpha : U_\alpha \to S$ that maps a generic spatial vector $y = \sum_{i=1}^3 c_i A^{(\alpha)}(i, :) \in U_\alpha$ into the equivalent linear combination of the corresponding rows within $S$. Evaluated locally on a subset of columns forming a fixed block $B = (\beta, \gamma, \delta) \in [n]^3$, this equivalently leverages multilinearity to output:
\[
T_\alpha(y)_B = \lambda_{\alpha B} \Psi_B(y), \quad \text{where} \quad \Psi_B(y)_{j k \ell} = \det \begin{bmatrix} y \\ A^{(\beta)}(j, :) \\ A^{(\gamma)}(k, :) \\ A^{(\delta)}(\ell, :) \end{bmatrix},
\]
and $\lambda_{\alpha B}$ abbreviates $\lambda_{\alpha \beta \gamma \delta}$. Evaluating $\Psi_B(y) = 0$ is algebraically equivalent to stating that $y \wedge w_1 \wedge w_2 \wedge w_3 = 0$ within the exterior algebra $\Lambda^4 \mathbb{R}^4$ for all valid combinations $w_1 \in U_\beta, w_2 \in U_\gamma, w_3 \in U_\delta$.

\begin{lemma} \label{lem:kernel}
Let $V = \mathbb{R}^4$, and let $A^{(1)}, \ldots, A^{(n)}$ be generic $3 \times 4$ matrices with row spaces $U_i = \operatorname{rowspan}(A^{(i)})$. 
\begin{enumerate}
    \item[(i)] If $B = (\beta, \gamma, \delta)$ is non-monochromatic, then $\ker \Psi_B = \{0\}$.
    \item[(ii)] If $B = (\beta, \beta, \beta)$ is monochromatic, then $\ker \Psi_B = U_\beta$.
\end{enumerate}
\end{lemma}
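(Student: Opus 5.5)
Both parts rest on one reformulation. Since the $j,k,\ell$ run over bases of $U_\beta,U_\gamma,U_\delta$ and the determinant is multilinear, $\Psi_B(y)=0$ holds exactly when $y\wedge w_1\wedge w_2\wedge w_3=0$ for all $w_1\in U_\beta$, $w_2\in U_\gamma$, $w_3\in U_\delta$. Equivalently, $y$ lies in the intersection of all hyperplanes $\operatorname{span}(w_1,w_2,w_3)$ obtained from linearly independent triples $(w_1,w_2,w_3)\in U_\beta\times U_\gamma\times U_\delta$. If some triple is independent, then $\ker\Psi_B$ is the intersection of all such hyperplanes, since dependent triples impose no condition. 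So in each case I would identify this family of hyperplanes.

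\textbf{Part (ii).} Here $w_1,w_2,w_3\in U_\beta$ and $\dim U_\beta=3$. Any independent triple spans exactly $U_\beta$, so every $y\in U_\beta$ gives $y\wedge w_1\wedge w_2\wedge w_3=0$. Conversely, if $y\notin U_\beta$, take $w_1,w_2,w_3$ to be a basis of $U_\beta$; then $y,w_1,w_2,w_3$ is a basis of $\mathbb{R}^4$ and the wedge is nonzero. Hence $\ker\Psi_B=U_\beta$.

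\textbf{Part (i).} By the symmetry of the determinant, reordering the rows only changes signs, so I may assume $\beta\neq\gamma$; the index $\delta$ may equal either. Genericity gives that $U_\beta,U_\gamma$ are distinct hyperplanes, so $U_\beta\cap U_\gamma$ is $2$-dimensional. Also $U_\beta+U_\gamma=\mathbb{R}^4$.

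Let $y\in\ker\Psi_B$. Fix $w_3\in U_\delta$ and let $w_1,w_2$ range over $U_\beta\times U_\gamma$. The hyperplanes $\operatorname{span}(w_1,w_2,w_3)$ containing a fixed nonzero $w_3$ are exactly the hyperplanes $H\ni w_3$ that arise this way. Given any hyperplane $H\ni w_3$, I would find $w_1\in H\cap U_\beta$ and $w_2\in H\cap U_\gamma$ with $w_1,w_2,w_3$ independent. Both intersections are at least $2$-dimensional, so such a choice is possible except in degenerate cases, which can be avoided for generic data. Then $y$ lies in every hyperplane through $w_3$, so $y\in\mathbb{R}w_3$.

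Now vary $w_3$ over two independent vectors of $U_\delta$. This forces $y\in\mathbb{R}w_3\cap\mathbb{R}w_3'=\{0\}$.

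The degenerate case in the step above needs care. When $H\supseteq U_\beta\cap U_\gamma$ and $w_3\in U_\beta\cap U_\gamma$, one must still pick $w_1,w_2$ independent of $w_3$ inside the relevant $2$-planes. This is possible because $\dim(H\cap U_\beta)\geq 2$, so $H\cap U_\beta$ contains a vector outside $\mathbb{R}w_3$, and similarly for $U_\gamma$. One then arranges $w_1,w_2,w_3$ independent by a dimension count. A cleaner route I would try first is to show directly that the hyperplanes $\operatorname{span}(w_1,w_2,w_3)$ range over a set whose common intersection is $0$. Such a set is any four hyperplanes in general position, which can be produced explicitly from generic bases.

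\textbf{Main obstacle.} The main difficulty is this genericity bookkeeping in (i), especially when $\delta$ equals $\beta$ or $\gamma$. I expect that only genericity conditions on pairs of row spaces are needed, namely $U_\beta\neq U_\gamma$. Those conditions hold on a Zariski-open set, which is consistent with the Zariski-genericity hypothesis of the problem.
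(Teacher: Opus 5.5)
Your strategy works and part (ii) matches the paper, but one step in part (i) is false as stated.

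\textbf{The faulty step.} Suppose $w_3\in U_\beta\cap U_\gamma$ and $H\supseteq U_\beta\cap U_\gamma$ with $H\neq U_\beta,U_\gamma$. Then $H\cap U_\beta=H\cap U_\gamma=U_\beta\cap U_\gamma$, which is a $2$-plane already containing $w_3$. Every admissible $w_1,w_2$ therefore lies in a $2$-plane together with $w_3$, so no dimension count can make the triple independent. Such an $H$ is simply not of the form $\operatorname{span}(w_1,w_2,w_3)$.

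Genericity of the $A^{(i)}$ cannot rule this out. Three hyperplanes in $\mathbb{R}^4$ always share at least a line. When $\delta\in\{\beta,\gamma\}$, the whole plane $U_\beta\cap U_\gamma$ lies in $U_\delta$. So for such $w_3$, the claim ``$y$ lies in every hyperplane through $w_3$'' is unjustified.

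\textbf{The repair is cheap.} The subspace $U_\delta\cap U_\beta\cap U_\gamma$ is proper in the $3$-dimensional $U_\delta$. So you can choose $w_3,w_3'\in U_\delta$ independent and both outside $U_\beta\cap U_\gamma$. For such $w_3$, every hyperplane $H\ni w_3$ is realized:
\begin{enumerate}
\item[(a)] If $H\supseteq U_\beta\cap U_\gamma$, take $w_1,w_2$ to be a basis of $U_\beta\cap U_\gamma$. Together with $w_3$ they span $H$.
\item[(b)] Otherwise $H\neq U_\beta,U_\gamma$. Then $H\cap U_\beta$ and $H\cap U_\gamma$ are $2$-planes meeting in the line $H\cap U_\beta\cap U_\gamma$, so they sum to $H$. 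Their images span $H/\mathbb{R}w_3$, and you can pick $w_1,w_2$ whose images there are independent.
\end{enumerate}
Alternatively, you may keep an arbitrary $w_3$. Case (b) works for any nonzero $w_3$, and those hyperplanes alone already cut out $\mathbb{R}w_3$. With either fix your proof is complete. It uses only $\operatorname{rank}A^{(i)}=3$ and $U_\beta\neq U_\gamma$, so the case $\delta\in\{\beta,\gamma\}$ needs no separate treatment.

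\textbf{Comparison with the paper.} Your route is the dual of the paper's. The paper shows that the trivectors $w_1\wedge w_2\wedge w_3$ \emph{span} $\Lambda^3V$. After arranging $\beta\neq\delta$, the span contains
\[ \Lambda^2U_\beta\wedge U_\delta=\Lambda^3U_\beta\oplus(\Lambda^2U_\beta\wedge v), \]
which has dimension $1+3=4$. The perfect pairing $V\times\Lambda^3V\to\Lambda^4V$ then forces $y=0$. Every nonzero trivector in $\Lambda^3\mathbb{R}^4$ is decomposable and corresponds to a hyperplane. So ``the realized trivectors span $\Lambda^3V$'' is equivalent to your ``the realized hyperplanes meet only in $0$''.

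The paper's advantage is that it works with linear combinations of trivectors. It never has to determine which individual hyperplanes occur, and that is exactly where your case analysis slipped. Your version is more geometric and avoids exterior-algebra dimension counts, at the price of that bookkeeping.
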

\begin{proof}
The constraint $\Psi_B(y) = 0$ requires $y \wedge w_1 \wedge w_2 \wedge w_3 = 0$ for all $w_1 \in U_\beta, w_2 \in U_\gamma, w_3 \in U_\delta$. 

(i) Assume $B$ is non-monochromatic. Since the wedge product is commutative up to sign, we may assume without loss of generality that $\beta \neq \delta$. We consider the structural span of the 2-forms $w_1 \wedge w_2$.
If $\beta \neq \gamma$, $U_\beta$ and $U_\gamma$ are distinct generic 3-dimensional subspaces intersecting in a 2-dimensional subspace within $V$. Constructing a basis adapted to this intersection yields 6 linearly independent 2-forms, proving the span of $w_1 \wedge w_2$ covers the entirety of $\Lambda^2 V$.
If $\beta = \gamma$, the span of $w_1 \wedge w_2$ for $w_1, w_2 \in U_\beta$ evaluates exactly to $\Lambda^2 U_\beta$, a 3-dimensional subspace natively housed within $\Lambda^2 V$.

In both cases, the span contains $\Lambda^2 U_\beta$. Consequently, the overarching span of $w_1 \wedge w_2 \wedge w_3$ contains $\Lambda^2 U_\beta \wedge U_\delta$. 
Since $\beta \neq \delta$, the generic 3-dimensional subspaces $U_\beta$ and $U_\delta$ reliably intersect in a 2-dimensional subspace. By decomposing this space as $U_\delta = (U_\beta \cap U_\delta) \oplus \operatorname{span}(v)$ for a specific $v \in U_\delta \setminus U_\beta$, we deduce:
\[
\Lambda^2 U_\beta \wedge U_\delta = (\Lambda^2 U_\beta \wedge (U_\beta \cap U_\delta)) \oplus (\Lambda^2 U_\beta \wedge v) = \Lambda^3 U_\beta \oplus (\Lambda^2 U_\beta \wedge v).
\]
It is immediate that $\Lambda^3 U_\beta$ is exactly 1-dimensional. Furthermore, since $v \notin U_\beta$, wedging with $v$ injectively maps $\Lambda^2 U_\beta$ into $\Lambda^3 V$, meaning $\Lambda^2 U_\beta \wedge v$ is strictly 3-dimensional. To verify the trivial intersection parity, suppose an element $0 \neq \eta \in \Lambda^3 U_\beta$ satisfies $\eta = \omega \wedge v$ for some $\omega \in \Lambda^2 U_\beta$. Given any $x \in U_\beta$, evaluating $\eta \wedge x = 0$ strictly forces $\omega \wedge x \wedge v = 0$. Since $V = U_\beta \oplus \operatorname{span}(v)$, we must assert $\omega \wedge x = 0$ in $\Lambda^3 U_\beta$ uniformly over all $x \in U_\beta$. The non-degenerate pairing dictates this is only possible if $\omega = 0$, yielding $\eta = 0$, forming a contradiction.

Therefore, the algebraic sum directly establishes itself over $1 + 3 = 4$ dimensions. Because $\dim \Lambda^3 V = 4$, the established span encompasses exactly $\Lambda^3 V$. Enforcing $y \wedge \Omega = 0$ for all valid $\Omega \in \Lambda^3 V$ unconditionally forces $y = 0$.

(ii) If $B = (\beta, \beta, \beta)$, the span corresponding to $w_1 \wedge w_2 \wedge w_3$ converges exclusively to $\Lambda^3 U_\beta$, representing the 1-dimensional volume form bounding $U_\beta$. Resolving $y \wedge \Lambda^3 U_\beta = 0$ structurally enforces $y \in U_\beta$.
\end{proof}

Given $n \ge 5$, for any isolated generic index $\alpha \in [n]$, we explicitly choose a non-monochromatic block $B = (\sigma, \sigma, \tau)$ mapping elements strictly disjoint from $\alpha$ (requiring exactly $3 \le n$ distinct indices). Because the evaluated tuple $(\alpha, \sigma, \sigma, \tau)$ is strictly non-monochromatic, the premise guarantees $\lambda_{\alpha B} \neq 0$. Bounded against Lemma \ref{lem:kernel}(i), evaluating $T_\alpha(y)_B = 0 \implies y = 0$, validating that $T_\alpha$ is universally injective. Its equivalently mapped image $W_\alpha = T_\alpha(U_\alpha) \subset S$ firmly maintains dimension 3. Anchored dynamically against $\dim S \le 4$, Grassmann's formula for the dimension of subspace intersections necessitates:
\[
\dim(W_\alpha \cap W_\mu) = \dim W_\alpha + \dim W_\mu - \dim(W_\alpha + W_\mu) \ge 3 + 3 - 4 = 2 \quad \text{for any } \alpha \neq \mu.
\]

\subsection*{Universal Local Factoring}

Let $E_{\alpha, \mu} = T_\alpha^{-1}(W_\alpha \cap W_\mu) \subset U_\alpha$. Grounded strictly by injectivity, $\dim E_{\alpha, \mu} \ge 2$. For any vector $x \in E_{\alpha, \mu}$, there universally exists a unique vector $y \in U_\mu$ firmly satisfying $T_\alpha(x) = T_\mu(y)$. 
Pivoting on $n \ge 5$, we securely configure a non-monochromatic block $B_0 = (\sigma, \sigma, \tau)$ mutually disjoint from both bounds $\alpha$ and $\mu$ (leveraging exactly $2+2=4 \le n$ indices). Extracting locally outputs $T_\alpha(x)_{B_0} = T_\mu(y)_{B_0}$, mapping identically onto $\lambda_{\alpha B_0} \Psi_{B_0}(x) = \lambda_{\mu B_0} \Psi_{B_0}(y)$. 

Applying the multilinearity of $\Psi_{B_0}$ enforces $\Psi_{B_0}(\lambda_{\alpha B_0} x - \lambda_{\mu B_0} y) = 0$. Validating against $\ker \Psi_{B_0} = \{0\}$ and knowing the corresponding scalars unconditionally correspond to non-monochromatic configurations (thus are non-zero), we extract $\lambda_{\alpha B_0} x = \lambda_{\mu B_0} y$. Structuring $c_{\alpha, \mu} = \lambda_{\alpha B_0} / \lambda_{\mu B_0} \neq 0$, we unconditionally isolate $y = c_{\alpha, \mu} x$. Since $y \in U_\mu$ and $c_{\alpha, \mu} \neq 0$, it implies $x \in U_\mu$. Thus, $E_{\alpha, \mu} \subseteq U_\alpha \cap U_\mu$. Bounding the intersection of two generic 3-dimensional spaces in $\mathbb{R}^4$ caps the dimension at exactly 2, ensuring $E_{\alpha, \mu} = U_\alpha \cap U_\mu$. 

Using the explicit relation $T_\alpha(x) = T_\mu(c_{\alpha, \mu} x)$, we logically evaluate the mappings globally over a generalized tracking block $B$:
\[
T_\alpha(x)_B = T_\mu(c_{\alpha, \mu} x)_B \implies (\lambda_{\alpha B} - c_{\alpha, \mu} \lambda_{\mu B}) \Psi_B(x) = 0 \quad \text{for all } x \in U_\alpha \cap U_\mu.
\]
Because $\dim(U_\alpha \cap U_\mu) = 2$, we mathematically isolate the coefficients $\lambda_{\alpha B} = c_{\alpha, \mu} \lambda_{\mu B}$ by filtering against $\ker \Psi_B$:
\begin{itemize}
    \item If $B$ is non-monochromatic, $\ker \Psi_B = \{0\}$. Consequently, for any valid non-zero $x \in U_\alpha \cap U_\mu$, resolving $\Psi_B(x) \neq 0$ securely enforces $\lambda_{\alpha B} = c_{\alpha, \mu} \lambda_{\mu B}$.
    \item If $B = (\beta, \beta, \beta)$ with $\beta \notin \{\alpha, \mu\}$, Lemma \ref{lem:kernel} forces $\ker \Psi_B = U_\beta$. The generic intersection $(U_\alpha \cap U_\mu) \cap U_\beta$ yields exactly dimension $2 + 3 - 4 = 1$. Since $\dim(U_\alpha \cap U_\mu) = 2$, there exists an element $x \in (U_\alpha \cap U_\mu) \setminus U_\beta$, universally validating $\Psi_B(x) \neq 0$. This rigorously forces $\lambda_{\alpha B} = c_{\alpha, \mu} \lambda_{\mu B}$.
\end{itemize}
Therefore, the mapped equivalence holds cleanly for all valid evaluations $B \notin \{(\alpha, \alpha, \alpha), (\mu, \mu, \mu)\}$.

To decipher the transitive cocycle condition $c_{\alpha, \nu} = c_{\alpha, \mu} c_{\mu, \nu}$ for three distinct variable indices $\alpha, \mu, \nu \in [n]$, we purposefully select a 2-element non-monochromatic block $B_2 = (\rho, \rho, \kappa)$ mutually disjoint from $\alpha, \mu$, and $\nu$. This geometric verification guarantees applicability because $3+2=5 \le n$. Resolving outside monochromatic boundaries yields $\lambda_{\alpha B_2} = c_{\alpha, \mu} \lambda_{\mu B_2}$, $\lambda_{\mu B_2} = c_{\mu, \nu} \lambda_{\nu B_2}$, and $\lambda_{\alpha B_2} = c_{\alpha, \nu} \lambda_{\nu B_2}$. Directly dividing these inherently non-zero quantities verifies the cocycle property $c_{\alpha, \nu} = c_{\alpha, \mu} c_{\mu, \nu}$.

We define $u_1 = 1$ and $u_\alpha = c_{\alpha, 1}$ for $\alpha \ge 2$, meaning $c_{\alpha, \mu} = u_\alpha / u_\mu$. We securely decouple $Y_B = \lambda_{1 B}$ evaluating $B \neq (1,1,1)$, alongside bounds $Y_{1 1 1} = \lambda_{2, 1, 1, 1} / u_2$. This strictly limits coordinates globally as $\lambda_{\alpha B} = u_\alpha Y_B$ over all non-monochromatic tuples $(\alpha, B)$:
\begin{itemize}
    \item Bounding $B \notin \{(1, 1, 1), (\alpha, \alpha, \alpha)\}$, we obtain $\lambda_{\alpha B} = c_{\alpha, 1} \lambda_{1 B} = u_\alpha Y_B$.
    \item Bounding over $B = (1, 1, 1)$, the evaluated tuple $(\alpha, 1, 1, 1)$ strictly mandates non-monochromatic parity, inherently forcing $\alpha \neq 1$. Fixing $\mu = 2$ (valid using $n \ge 5$), resolving $\alpha \neq 2$ outputs $\lambda_{\alpha, 1, 1, 1} = c_{\alpha, 2} \lambda_{2, 1, 1, 1} = (u_\alpha / u_2) \lambda_{2, 1, 1, 1} = u_\alpha Y_{1 1 1}$. For $\alpha = 2$, identity holds trivially.
\end{itemize}

Mirroring sequential deductions identically over equivalent matrix flattenings $M^{(2)}$, $M^{(3)}$, and $M^{(4)}$ guarantees the existence of complementary vectors $v, w, x \in (\mathbb{R}^*)^n$ mapped over spatial tracking tensors $Z, P, Q$, uniformly restricting parameters universally across valid subsets:
\[
\lambda_{\alpha \beta \gamma \delta} = u_\alpha Y_{\beta \gamma \delta} = v_\beta Z_{\alpha \gamma \delta} = w_\gamma P_{\alpha \beta \delta} = x_\delta Q_{\alpha \beta \gamma}.
\]

\subsection*{Global Connectedness of the Valid Configuration Graph}

Let $\mathcal{T} \subset [n]^4$ denote the discrete subset of exclusively non-monochromatic valid parameter multi-tuples. We formulate the universally normalized relational map $H : \mathcal{T} \to \mathbb{R}$ explicitly by:
\[
H(T) = \frac{\lambda_{\alpha \beta \gamma \delta}}{u_\alpha v_\beta w_\gamma x_\delta},
\]
evaluated strictly over $T = (\alpha, \beta, \gamma, \delta) \in \mathcal{T}$. Leveraging our preceding factorizations cleanly parses $H(T) = \frac{Y_{\beta \gamma \delta}}{v_\beta w_\gamma x_\delta}$, which is manifestly independent of the leading coordinate $\alpha$. Consequently, $H(T)$ functionally persists invariantly under dynamic shifting of the first localized coordinate element natively assuming the newly formed tuple remains bounded strictly within $\mathcal{T}$. Extrapolating symmetric multilinear independence logically dictates that $H(T)$ is invariant across alterations to any single isolated coordinate, provided the intermediate tuples strictly evaluate inside $\mathcal{T}$.

We conceptualize $\mathcal{T}$ topologically as a configuration graph network connecting multi-tuples differing exactly by a single localized coordinate. The map $H(T)$ evaluates trivially to a constant value across any connected component of this graph. We now strictly establish that $\mathcal{T}$ is entirely globally connected. Let $T \in \mathcal{T}$. Because $T$ is non-monochromatic, it contains at most 3 identical coordinate values.
\begin{enumerate}
    \item If $T$ contains exactly 3 identical coordinates (e.g., matching $(a, a, a, b)$ with $a \neq b$), we can shift one of the identical coordinates to a uniquely evaluated constant $c \notin \{a, b\}$. Since $n \ge 5 \ge 3$, such a generic $c$ is universally valid. The resulting adjacent tuple (e.g., $(c, a, a, b)$) inherently remains within $\mathcal{T}$ and correctly houses at most 2 identical coordinates. Thus, every bounded tuple with 3 identical constraints trivially bounds adjacent to a tuple harboring at most 2 identical coordinates.
    \item Over localized constraints containing at most 2 identical elements, we sequentially alter coordinates one-by-one, scanning left-to-right, to strictly match a non-intersecting distinctly evaluated target sequence, specifically $(1, 2, 3, 4)$. Because $n \ge 5$, such a discrete target sequence is naturally guaranteed to exist. At any traversal step $k \in \{1, 2, 3, 4\}$, the intermediate tuple identically comprises a successfully mapped prefix of length $k$ containing strictly unique target elements, and an unchanged trailing suffix of length $4-k$ natively drawn from the original values of $T$. 
    Because the prefix sequence is constrained to strictly distinct entries, any given element can appear at most \emph{once} across the prefix. Since the foundational original tuple $T$ maintained at most 2 identical evaluations, any mapped element can occur at most \emph{twice} within the unchanged tracking suffix. Consequently, the maximum element occurrence frequency across the entire intermediate tuple rigorously bounds to $1 \text{ (from the distinct prefix)} + 2 \text{ (from the unchanged suffix)} = 3$. This combinatorially ensures that no intermediate topological step can theoretically collapse to a frequency of 4 (identically monochromatic parity), guaranteeing that every sequence transition evaluates safely and dynamically within the strict confines of the valid graph $\mathcal{T}$.
\end{enumerate}

Since every multi-tuple topologically connects directly to the valid boundary path $(1, 2, 3, 4)$, the entire configuration graph $\mathcal{T}$ is globally connected. Therefore, the function $H(T)$ equates universally to a non-zero symmetric mapping scalar $K$ exclusively over all strictly valid configurations $T \in \mathcal{T}$. Scaling out the identity via $u_\alpha \leftarrow K u_\alpha$ rigorously establishes that $\lambda_{\alpha \beta \gamma \delta} = u_\alpha v_\beta w_\gamma x_\delta$ identically bridges all generic, non-monochromatic parameterizations. This fully satisfies Property 3 and conclusively completes the proof.

\end{solution}
\subsection{Problem 10}
\label{subsect:p10}

\begin{problem}{Fullproof \# 10}
    Given a $d$-way tensor $\mathcal{T} \in \mathbb{R}^{n_1 \times n_2 \times \cdots \times n_d}$ 
such that the data is unaligned (meaning the tensor $\mathcal{T}$ has missing entries),
we consider the problem of computing a CP decomposition of rank $r$ where some modes are infinite-dimensional and constrained to be in a Reproducing Kernel Hilbert Space (RKHS). 
We want to solve this using an alternating optimization approach, and our question is focused on the mode-$k$ subproblem for an infinite-dimensional mode. 
For the subproblem, then CP factor matrices 
$A_1, \dots, A_{k-1}, A_{k+1}, \dots, A_d$ are fixed, and we are solving for $A_k$.

Our notation is as follows.
Let $N = \prod_i n_i$ denote the product of all sizes.
Let $n \equiv n_k$ be the size of mode $k$, let
$M = \prod_{i\neq k} n_i$ be the product of all dimensions except $k$, and assume $n \ll M$.
Since the data are unaligned, this means only a subset of $\mathcal{T}$'s entries are observed, and we let $q \ll N$ denote the number of observed entries.
We let $T \in \mathbb{R}^{n \times M}$ denote the mode-$k$ unfolding of the tensor $\mathcal{T}$ with all missing entries set to zero.
The $\vecop$ operations creates a vector from a matrix by stacking its columns,
and we let $S \in \mathbb{R}^{N \times q}$ denote the selection matrix (a subset of the $N \times N$ identity matrix) such that $S^T \vecop(T)$ selects the $q$ known entries of the tensor $\mathcal{T}$ from the vectorization of its mode-$k$ unfolding.
We let $Z = A_d \odot \cdots \odot A_{k+1} \odot A_{k-1} \odot \cdots \odot A_1 \in \mathbb{R}^{M \times r}$ be the Khatri-Rao product of the factor matrices corresponding to all modes except mode $k$.
We let $B = TZ$ denote the MTTKRP of the tensor $\mathcal{T}$ and Khatri-Rao product $Z$.

We assume $A_k = KW$ where
$K \in \mathbb{R}^{n \times n}$ denotes the psd RKHS kernel matrix for mode $k$.
The matrix $W$ of size $n \times r$ is the unknown for which we must solve. 
The system to be solved is
\begin{equation} 
	\left[
    (Z \otimes K)^T S
    S^T (Z \otimes K)
    + \lambda (I_r \otimes K) 
  \right] \vecop(W)
	= (I_r \otimes K) 
	\vecop( B ). \nonumber 
\end{equation}
Here, $I_r$ denotes the $r \times r$ identity matrix.
This is a system of size $nr \times nr$
Using a standard linear solver costs $O(n^3 r^3)$, 
and explicitly forming the matrix is an additional expense.

Explain how an iterative preconditioned conjugate gradient linear solver can be used to solve this problem more efficiently. Explain the method and choice of preconditioner. Explain in detail how the matrix-vector products are computed and why this works. Provide complexity analysis. 
We assume $n,r < q \ll N$. Avoid any computation of order $N$.

\end{problem}

\begin{solution}{Model Response (Agent B)}
    
\section*{Problem Formulation}

Given a $d$-way tensor $\mathcal{T} \in \mathbb{R}^{n_1 \times n_2 \times \cdots \times n_d}$ such that the data is incomplete (meaning the tensor $\mathcal{T}$ has missing entries), we consider the problem of computing a Canonical Polyadic (CP) decomposition of rank $r$. We assume that the true underlying generative functions for some modes are continuous and reside in an infinite-dimensional Reproducing Kernel Hilbert Space (RKHS), whereas the observed tensor $\mathcal{T}$ represents a finite-dimensional discretization (or evaluation) of these functions at discrete points. 

We solve this decomposition problem using an alternating optimization approach. Our analysis focuses on the mode-$k$ subproblem for such an RKHS-constrained mode. For the subproblem, the CP factor matrices $A_1, \dots, A_{k-1}, A_{k+1}, \dots, A_d$ are fixed, and we are solving for the factor matrix $A_k$.

\section*{Notation and Model Setup}

Let $N = \prod_{i=1}^d n_i$ denote the product of all tensor dimensions. Let $n \equiv n_k$ be the size of mode $k$, and let $M = \prod_{i\neq k} n_i$ be the product of all dimensions except $k$, where we assume $n \ll M$. Since the data is incomplete, only a subset of $\mathcal{T}$'s entries are observed. We let $q \ll N$ denote the number of observed entries. Let $T \in \mathbb{R}^{n \times M}$ denote the mode-$k$ unfolding of the tensor $\mathcal{T}$ with all missing entries initialized to zero. 

The $\operatorname{vec}(\cdot)$ operation creates a vector from a matrix by stacking its columns. We let $S \in \mathbb{R}^{N \times q}$ denote the selection matrix (composed of a subset of the columns of the $N \times N$ identity matrix) such that $S^T \operatorname{vec}(T)$ selects the $q$ known entries of the tensor $\mathcal{T}$ from the vectorization of its mode-$k$ unfolding. 

Let $Z = A_d \odot \cdots \odot A_{k+1} \odot A_{k-1} \odot \cdots \odot A_1 \in \mathbb{R}^{M \times r}$ be the Khatri-Rao product of the factor matrices corresponding to all modes except mode $k$. To strictly avoid $\mathcal{O}(N)$ memory and computational limits, $Z$ is never explicitly constructed in full. We let $B = TZ \in \mathbb{R}^{n \times r}$ denote the Matricized Tensor Times Khatri-Rao Product (MTTKRP) of the tensor $\mathcal{T}$ and the conceptually defined Khatri-Rao product $Z$.

We parameterize the factor matrix as $A_k = KW$, where $K \in \mathbb{R}^{n \times n}$ denotes the symmetric, positive semi-definite RKHS kernel matrix for mode $k$ evaluated at the discretization points. The matrix $W \in \mathbb{R}^{n \times r}$ is the unknown variable for which we must solve.

\section*{Derivation of the Linear System}

To find $W$, we minimize a regularized least-squares objective function over the observed entries. Setting the gradient of this objective with respect to $\operatorname{vec}(W)$ to zero yields a linear system. The right-hand side (RHS) of this system stems from the gradient of the data fidelity term, which initially takes the form $(Z \otimes K)^T S S^T \operatorname{vec}(T)$.

We rigorously simplify this expression. Since the missing entries in $T$ are initialized to zero, applying the projection matrix $S S^T$ acts identically on the non-zero entries, yielding $S S^T \operatorname{vec}(T) = \operatorname{vec}(T)$. Consequently, the RHS evaluates as:
\begin{align*}
    (Z \otimes K)^T S S^T \operatorname{vec}(T) &= (Z \otimes K)^T \operatorname{vec}(T) \\
    &= (Z^T \otimes K^T) \operatorname{vec}(T).
\end{align*}
Because the kernel matrix $K$ is symmetric ($K^T = K$), standard properties of the Kronecker product allow us to rewrite this cleanly as:
\begin{equation*}
    (Z^T \otimes K) \operatorname{vec}(T) = \operatorname{vec}(K T Z).
\end{equation*}
Substituting the MTTKRP definition $B = TZ$, we arrive at $\operatorname{vec}(KB) = (I_r \otimes K) \operatorname{vec}(B)$. 

Thus, the linear system to be solved is:
\begin{equation} \label{eq:linear_system}
    \left[ (Z \otimes K)^T S S^T (Z \otimes K) + \lambda (I_r \otimes K) \right] \operatorname{vec}(W) = (I_r \otimes K) \operatorname{vec}( B ),
\end{equation}
where $I_r$ denotes the $r \times r$ identity matrix and $\lambda > 0$ is the regularization parameter. 

This is a symmetric positive semi-definite system of size $nr \times nr$. Using a standard direct linear solver requires $\mathcal{O}(n^3 r^3)$ operations. Furthermore, explicitly forming the system matrix entails an additional and prohibitive computational expense. Given the operational assumption that $n, r < q \ll N$, it is imperative to solve this system without performing any computations of order $N$.

\section*{Preconditioned Conjugate Gradient Solver}

To efficiently solve the large-scale system in \eqref{eq:linear_system}, we employ the Preconditioned Conjugate Gradient (PCG) method. As an iterative algorithm, PCG requires only the action of the system matrix on a vector, bypassing the need to explicitly construct the $nr \times nr$ coefficient matrix.

\subsection*{Efficient Matrix-Vector Products}

The core computational step in PCG is evaluating the matrix-vector product $y = \mathcal{H} \operatorname{vec}(V)$ at each iteration, where $\mathcal{H} = (Z \otimes K)^T S S^T (Z \otimes K) + \lambda (I_r \otimes K)$ is the system matrix, and $V \in \mathbb{R}^{n \times r}$ is a reshaped intermediate dense matrix representing the search direction. 

We evaluate the product $y$ in an $\mathcal{O}(N)$-free manner through the following sequence of operations:

\begin{enumerate}
    \item \textbf{First Kernel Multiplication:} Compute $U = KV$. Since $K \in \mathbb{R}^{n \times n}$ and $V \in \mathbb{R}^{n \times r}$, this dense matrix multiplication requires $\mathcal{O}(n^2 r)$ operations.
    
    \item \textbf{Sparse Residual Evaluation:} We observe that $(Z \otimes K) \operatorname{vec}(V) = \operatorname{vec}(KVZ^T) = \operatorname{vec}(UZ^T)$. The operation $S S^T \operatorname{vec}(UZ^T)$ extracts the entries of the dense $n \times M$ matrix $UZ^T$ strictly at the $q$ observed indices specified by $S$, effectively mapping all unobserved entries to zero. 
    
    Instead of computing the full dense $n \times M$ matrix $UZ^T$, we evaluate the inner products of the corresponding rows of $U$ and $Z$ exclusively for the $q$ observed entries. Crucially, to strictly avoid an $\mathcal{O}(N)$ memory and time bottleneck, the full Khatri-Rao product matrix $Z \in \mathbb{R}^{M \times r}$ is never explicitly formed. The required rows of $Z$ (corresponding to the multi-indices of the $q$ observed entries) are evaluated \textit{on-the-fly} from the underlying CP factor matrices $A_i$ ($i \neq k$). Assuming the tensor order $d$ is a small constant, forming a single row of $Z$ takes $\mathcal{O}(r)$ operations. This dynamically generates a sparse $n \times M$ matrix $Y$ containing exactly $q$ non-zero elements. Generating the necessary rows of $Z$ dynamically and evaluating these $q$ specific entries requires strictly $\mathcal{O}(qr)$ operations.
    
    \item \textbf{Sparse Matrix Multiplication:} Next, leveraging the symmetry of $K$ ($K^T = K$), we apply the transpose operator $(Z \otimes K)^T = Z^T \otimes K^T = Z^T \otimes K$ to the sparse vector $\operatorname{vec}(Y)$. By the standard Kronecker product vectorization identity $\operatorname{vec}(ABC) = (C^T \otimes A)\operatorname{vec}(B)$, applying this operator to $\operatorname{vec}(Y)$ translates to the mathematical operation $\operatorname{vec}(KYZ)$. 
    
    To compute this efficiently, we first evaluate the intermediate unvectorized product $P = YZ \in \mathbb{R}^{n \times r}$. Because $Y$ possesses only $q$ non-zeros, its right-multiplication only requires the rows of $Z$ corresponding to its non-zero columns. Rather than instantiating $Z$ as a full matrix, we employ standard sparse MTTKRP techniques: we dynamically evaluate only these necessary rows of $Z$ on-the-fly (or reuse them if cached from the previous step). This sparse matrix multiplication cleanly circumvents the $\mathcal{O}(N)$ instantiation of $Z$ and requires exactly $\mathcal{O}(qr)$ operations.
    
    \item \textbf{Second Kernel Multiplication:} We left-multiply $P$ by $K$ to compute $KP \in \mathbb{R}^{n \times r}$. This dense matrix multiplication requires $\mathcal{O}(n^2 r)$ operations.
    
    \item \textbf{Regularization Addition:} The Tikhonov regularization term simplifies to $\lambda (I_r \otimes K) \operatorname{vec}(V) = \operatorname{vec}(\lambda K V) = \operatorname{vec}(\lambda U)$. 
\end{enumerate}

The final evaluated vector $y$ is cleanly obtained via the addition:
\begin{equation*}
    y = \operatorname{vec}(KP + \lambda U).
\end{equation*}

\subsection*{Jacobi Preconditioner}

The convergence rate of PCG strongly depends on the condition number of the system matrix. RKHS kernel matrices frequently exhibit rapidly decaying eigenvalues, and the non-uniform sampling induced by $S S^T$ further degrades system conditioning. To accelerate convergence, we utilize a Jacobi (diagonal) preconditioner $\mathcal{M}$, defined as the inverse of the diagonal elements of $\mathcal{H}$.

The diagonal elements of $\mathcal{H}$ can be extracted efficiently without forming the full matrix. Let $\tilde{K} = K \circ K \in \mathbb{R}^{n \times n}$, where $\circ$ denotes the Hadamard (element-wise) product. Conceptually, let $\tilde{Z} = Z \circ Z \in \mathbb{R}^{M \times r}$ denote the element-wise square of the Khatri-Rao product. Let $\Omega \in \mathbb{R}^{n \times M}$ be the sparse binary indicator matrix of the observed entries, such that $\Omega_{i, j} = 1$ if the entry $(i, j)$ is observed, and $0$ otherwise. 

We now rigorously derive the diagonal of the data-fidelity term $(Z \otimes K)^T S S^T (Z \otimes K)$. For a generic matrix $Q = Z \otimes K$ and a diagonal weight matrix $\mathcal{W} = S S^T$, the diagonal of the quadratic form $Q^T \mathcal{W} Q$ (expressed as a column vector) is given by the identity $(Q \circ Q)^T \operatorname{diag}(\mathcal{W})$. Since the diagonal of the projection matrix $S S^T$ evaluates exactly to the vectorized observation mask $\operatorname{vec}(\Omega)$, we can expand this expression using the mixed-product property of the Kronecker and Hadamard products. Letting $\operatorname{vec}(D_{\text{data}})$ denote the diagonal of the data-fidelity term reshaped as a vector, we have:
\begin{align*} 
\operatorname{vec}(D_{\text{data}}) &= \left((Z \otimes K) \circ (Z \otimes K)\right)^T \operatorname{vec}(\Omega) \\
&= \left((Z \circ Z) \otimes (K \circ K)\right)^T \operatorname{vec}(\Omega) \\
&= (\tilde{Z}^T \otimes \tilde{K}^T) \operatorname{vec}(\Omega).
\end{align*}
Because the kernel matrix $K$ is symmetric, its Hadamard square $\tilde{K}$ is also symmetric ($\tilde{K}^T = \tilde{K}$). Thus, the expression simplifies to $(\tilde{Z}^T \otimes \tilde{K}) \operatorname{vec}(\Omega)$. 

Finally, utilizing the standard Kronecker product vectorization identity $(B^T \otimes A) \operatorname{vec}(X) = \operatorname{vec}(A X B)$, we substitute $B = \tilde{Z}$, $A = \tilde{K}$, and $X = \Omega$ to directly evaluate the expression as $\operatorname{vec}(\tilde{K} \Omega \tilde{Z})$. Reshaping this vector back into an $n \times r$ matrix rigorously yields the stated analytical form:
\begin{equation*}
    D_{\text{data}} = \tilde{K} (\Omega \tilde{Z}).
\end{equation*}

To preserve the $\mathcal{O}(N)$-free complexity bound, the matrix $\tilde{Z}$ is never fully formed. Instead, the product $\Omega \tilde{Z}$ is evaluated by computing only the required rows of $\tilde{Z}$ corresponding to the non-zero columns of $\Omega$. These rows are obtained on-the-fly by squaring the elements of the dynamically computed rows of $Z$. Computing the sparse-dense matrix product $\Omega \tilde{Z}$ in this dynamic fashion costs $\mathcal{O}(qr)$, and the subsequent dense multiplication by $\tilde{K}$ costs $\mathcal{O}(n^2 r)$.

The diagonal of the regularization term $\lambda (I_r \otimes K)$, when reshaped into an $n \times r$ matrix, is simply $D_{\text{reg}} = \lambda \operatorname{diag}(K) \mathbf{1}_r^T$, where $\operatorname{diag}(K)$ is the column vector containing the diagonal entries of $K$, and $\mathbf{1}_r \in \mathbb{R}^r$ is the vector of all ones.

Thus, the Jacobi preconditioner is formed by taking the element-wise inverse of $D = D_{\text{data}} + D_{\text{reg}}$. Constructing this preconditioner upfront requires a one-time cost of $\mathcal{O}(qr + n^2 r)$.

\subsection*{Complexity Analysis}

By meticulously structuring the computations as outlined above and leveraging the on-the-fly row evaluation of Khatri-Rao products, the evaluation of the matrix-vector products and the preconditioner rigorously circumvents the prohibitive $\mathcal{O}(N)$ operations that would result from forming the full Khatri-Rao product or explicitly processing the full unfolded tensor. The overall computational complexity per PCG iteration is strictly bounded by the sum of the individual algorithmic steps:
\begin{equation*}
    \mathcal{O}(n^2 r) + \mathcal{O}(qr) + \mathcal{O}(qr) + \mathcal{O}(n^2 r) + \mathcal{O}(nr) = \mathcal{O}(qr + n^2 r).
\end{equation*}
Given the operational regime where $n, r < q \ll N$, this strategy guarantees a massive reduction in computational overhead, establishing a highly scalable and optimal method for updating infinite-dimensional RKHS modes within large-scale, incomplete CP tensor decompositions.
\end{solution}

\subsection{Independent Public Deep Think Evaluation for Problem 10}
\label{sec:public_dt_p10}

For Problem 10, Aryan Mokhtari and David Woodruff conducted an independent, human-guided evaluation using the publicly available Gemini 3 Deep Think model. The resulting solution was also sent to the FirstProof authors in the same February 13 e-mail prior to the deadline. We felt the solution was interesting because this model is publicly available and performs significantly less inference scaling than \textit{Aletheia}. The authors generated several samples and asked the model to concatenate and compare the two strongest outputs found, yielding an optimized solution for Problem 10. 

While \textit{Aletheia}'s autonomous Agent B solution maintains an $\mathcal{O}(qr)$ term per Preconditioned Conjugate Gradient (PCG) iteration, the concatenated public model output utilizes a pre-aggregated geometry approach (via per-row Gram accumulators) with an exact block-Jacobi preconditioner. This achieves a per-iteration cost of $\mathcal{O}(n^2r + nr^2)$, strictly removing the dependency on $q$ during the iterative loop. For dense observation regimes where $q = \omega(n)$, this provides a stronger theoretical complexity bound than \agentb{}, as well as the official human author solution ($\mathcal{O}(qnr)$).
We note that \agenta{} also autonomously discovered this exact optimal $O(n^2r+nr^2)$ bound and block-Jacobi preconditioner (its solution is available online \href{https://icarm.zulipchat.com/#narrow/channel/568090-first-proof/topic/Problem.2010.20--.20Aletheia/with/574570445}{here}).
%nd other leading AI attempts such as OpenAI's solution ($\mathcal{O}(qr + n^2r + nr^2)$).

Below we provide the output, as generated, compared, and concatenated  by the public Gemini 3 Deep Think model. We format the prompt and response to match the style of the other appendices.

\vspace{1em}
\begin{tcolorbox}[colback=blue!5!white, colframe=blue!80!black, arc=0pt, outer arc=0pt, boxrule=1pt, breakable, title=\textbf{FirstProof \# 10}, fonttitle=\bfseries]
Given a $d$-way tensor $\mathcal{T} \in \mathbb{R}^{n_1 \times n_2 \times \dots \times n_d}$ such that the data is unaligned (meaning the tensor $\mathcal{T}$ has missing entries), we consider the problem of computing a CP decomposition of rank $r$ where some modes are infinite-dimensional and constrained to be in a Reproducing Kernel Hilbert Space (RKHS). We want to solve this using an alternating optimization approach, and our question is focused on the mode-$k$ subproblem for an infinite-dimensional mode. For the subproblem, then CP factor matrices $A_1, \dots, A_{k-1}, A_{k+1}, \dots, A_d$ are fixed, and we are solving for $A_k$.

Our notation is as follows. Let $N = \prod_i n_i$ denote the product of all sizes. Let $n \equiv n_k$ be the size of mode $k$, let $M = \prod_{i \neq k} n_i$ be the product of all dimensions except $k$, and assume $n \ll M$. Since the data are unaligned, this means only a subset of $\mathcal{T}$'s entries are observed, and we let $q \ll N$ denote the number of observed entries. We let $T \in \mathbb{R}^{n \times M}$ denote the mode-$k$ unfolding of the tensor $\mathcal{T}$ with all missing entries set to zero. The vec operations creates a vector from a matrix by stacking its columns, and we let $S \in \mathbb{R}^{N \times q}$ denote the selection matrix (a subset of the $N \times N$ identity matrix) such that $S^T \operatorname{vec}(T)$ selects the $q$ known entries of the tensor $\mathcal{T}$ from the vectorization of its mode-$k$ unfolding. We let $Z = A_d \odot \cdots \odot A_{k+1} \odot A_{k-1} \odot \cdots \odot A_1 \in \mathbb{R}^{M \times r}$ be the Khatri-Rao product of the factor matrices corresponding to all modes except mode $k$. We let $B = TZ$ denote the MTTKRP of the tensor $\mathcal{T}$ and Khatri-Rao product $Z$.

We assume $A_k = KW$ where $K \in \mathbb{R}^{n \times n}$ denotes the psd RKHS kernel matrix for mode $k$. The matrix $W$ of size $n \times r$ is the unknown for which we must solve. The system to be solved is 

\[ [(Z \otimes K)^T S S^T (Z \otimes K) + \lambda(I_r \otimes K)] \operatorname{vec}(W) = (I_r \otimes K) \operatorname{vec}(B). \]

Here, $I_r$ denotes the $r \times r$ identity matrix. This is a system of size $nr \times nr$. Using a standard linear solver costs $\mathcal{O}(n^3r^3)$, and explicitly forming the matrix is an additional expense. 

Explain how an iterative preconditioned conjugate gradient linear solver can be used to solve this problem more efficiently. Explain the method and choice of preconditioner. Explain in detail how the matrix-vector products are computed and why this works. Provide complexity analysis. We assume $n, r < q \ll N$. Avoid any computation of order $N$.
\end{tcolorbox}

\vspace{1em}
\begin{tcolorbox}[colback=white, colframe=blue!80!black, arc=0pt, outer arc=0pt, boxrule=1pt, breakable, title=\textbf{Model Response (Public Deep Think)}, fonttitle=\bfseries]
\begin{center}
{\Large \textbf{Efficient Preconditioned Conjugate Gradient Solvers \\ for Unaligned RKHS-Constrained CP Subproblems}} \\ \vspace{0.5em}
{\large A Comparative Algorithmic Analysis}
\end{center}

\vspace{1em}
\noindent\textbf{Abstract.}
This document presents and compares two highly optimized, matrix-free Preconditioned Conjugate Gradient (PCG) solvers for the exact same problem: solving the mode-$k$ subproblem in an unaligned, RKHS-constrained CP tensor decomposition. Both methods successfully bypass the prohibitive ambient dimensions $\mathcal{O}(N)$ and $\mathcal{O}(M)$ by exploiting Kronecker algebra. However, they diverge sharply in how they handle unaligned data geometry, their preconditioner design, and their variable tracking inside the Krylov loop. A comprehensive comparative analysis highlights the specific advantages of each solver.

\vspace{1em}
\noindent\textbf{\large 1. Introduction and Problem Formulation}
\vspace{0.5em}

\noindent We consider the alternating optimization subproblem for an infinite-dimensional mode $k$ constrained to a Reproducing Kernel Hilbert Space (RKHS). The factor matrix is $A_k = KW$, where
$K \in \mathbb{R}^{n\times n}$ is the symmetric positive semi-definite (psd) RKHS kernel matrix and $W \in \mathbb{R}^{n\times r}$ is
the unknown weight matrix. The symmetric linear system to solve is:

\begin{equation}
\underbrace{\left[ (Z \otimes K)^T S S^T (Z \otimes K) + \lambda(I_r \otimes K) \right]}_{\mathcal{H}} \operatorname{vec}(W) = \underbrace{(I_r \otimes K) \operatorname{vec}(B)}_{c}, \label{eq:1}
\end{equation}

where $\mathcal{H} \in \mathbb{R}^{nr\times nr}$. Let $\Omega = \{1, \dots, q\}$ index the observed entries. For each observation $m \in \Omega$,
we are given its target tensor value $y_m \in \mathbb{R}$, its coordinate in mode $k$ as $i_m \in \{1, \dots, n\}$, and
its flattened spatial coordinate in the other modes as $j_m \in \{1, \dots, M\}$.

Directly assembling $\mathcal{H}$ requires $\mathcal{O}(qn^2r^2)$ memory, and direct dense solvers demand $\mathcal{O}(n^3r^3)$
time. Furthermore, explicitly evaluating vectors of size $N$ or the Khatri-Rao product $Z \in \mathbb{R}^{M\times r}$
introduces prohibitive $\mathcal{O}(N)$ and $\mathcal{O}(M)$ complexities. Instead, we deploy iterative Preconditioned Conjugate Gradient (PCG) linear solvers acting directly on the matrix representation $W \in \mathbb{R}^{n\times r}$.

\vspace{1em}
\noindent\textbf{\large 2. Comparative Analysis of the Two Approaches}
\vspace{0.5em}

\noindent Both Approach 1 and Approach 2 (detailed in Sections 3 and 4) present highly optimized,
matrix-free PCG solvers. However, they represent two fundamentally different algorithmic
philosophies. Here is a detailed comparative analysis:

\vspace{0.5em}
\noindent\textbf{2.1. Data Geometry \& Matrix-Vector Product (MVP)}
\vspace{0.2em}

\noindent The most significant computational difference lies in how they execute the exact Matrix-Vector
Product against the $q$ observed tensor entries.

\begin{itemize}
    \item \textbf{Approach 1 (Pre-Aggregated Geometry):}
    \begin{itemize}
        \item \textbf{Approach:} Aggregates the unaligned data upfront. It computes $n$ independent $r\times r$
symmetric matrices $E^{(i)}$ during the setup phase.
        \item \textbf{Iteration Cost:} $\mathcal{O}(n^2r + nr^2)$.
        \item \textbf{Implication:} By pre-computing these small matrices, the number of observations $q$
is completely removed from the PCG iteration loop. The MVP becomes strictly
independent of the dataset size.
    \end{itemize}
    \item \textbf{Approach 2 (On-the-fly Sparse Accumulation):}
    \begin{itemize}
        \item \textbf{Approach:} Bypasses pre-aggregation. It keeps the $q$ spatial observation vectors
($Z_\Omega$) in memory and loops over all $q$ points to compute dot products and accumulate
gradients at every single iteration.
        \item \textbf{Iteration Cost:} $\mathcal{O}(n^2r + qr)$.
        \item \textbf{Implication:} This limits per-iteration scalability. In typical tensor problems, the
number of observations is massive ($q \gg n \gg r$). Forcing the iteration loop to process
$q$ items every step creates a massive computational bottleneck.
    \end{itemize}
\end{itemize}

\vspace{0.5em}
\noindent\textbf{2.2. Preconditioner Design}
\vspace{0.2em}

\noindent The linear system is severely ill-conditioned. The two approaches target completely different
sources of this ill-conditioning.

\begin{itemize}
    \item \textbf{Approach 1 (Row-wise Block-Jacobi):}
    \begin{itemize}
        \item \textbf{Design:} Explicitly extracts the $n$ distinct $r \times r$ diagonal blocks of the full Hessian
operator $\mathcal{H}$ and factorizes them via Cholesky decomposition ($L_iL_i^T$).
        \item \textbf{Focus:} It specifically addresses the collinearity between the $r$ rank-one components (a common issue in CP decomposition) while also incorporating the RKHS
kernel and the missing data geometry.
        \item \textbf{Cost:} Demands a heavier setup phase: $\mathcal{O}(n^2r^2)$ to construct the blocks and $\mathcal{O}(nr^3)$
to factorize them.
    \end{itemize}
    \item \textbf{Approach 2 (Kernel Preconditioner $P = I_r \otimes K$):}
    \begin{itemize}
        \item \textbf{Design:} Preconditions purely with the RKHS kernel matrix.
        \item \textbf{Focus:} It mathematically addresses the eigenvalue decay typical of many RKHS kernels, which can help bound the condition number. However, it completely ignores the spatial collinearity of the CP components and the
missing data mask.
        \item \textbf{Cost:} Effectively zero setup time.
    \end{itemize}
\end{itemize}

\vspace{0.5em}
\noindent\textbf{2.3. PCG Loop Mechanics \& Krylov Tracking}
\vspace{0.2em}

\noindent How the updates are applied within the Krylov subspace loop shows a trade-off between numerical stability and algebraic optimization.

\begin{itemize}
    \item \textbf{Approach 1 (Standard PCG):}
    \begin{itemize}
        \item Executes a standard, unconditionally stable PCG algorithm.
        \item To evaluate the residual, it requires \textbf{two} dense matrix multiplications by $K$ per
iteration: $U = KP_k$ and $V_k = KY + \lambda U$.
    \end{itemize}
    \item \textbf{Approach 2 (Inverse-Free Tracking):}
    \begin{itemize}
        \item Standard PCG would require computing $K^{-1}$ for its chosen preconditioner—an
$\mathcal{O}(n^3)$ operation that is numerically disastrous for nearly singular kernels. Approach
2 instead uses a brilliant \textbf{``inverse-free'' algebraic trick}. By analytically tracking
pseudo-residuals ($V_k = KD_k$ and $\tilde{Z}_k = KZ_k$), the $K^{-1}$ inversion perfectly cancels
out.
        \item This elegant tracking reduces the bottleneck to just \textbf{one} dense kernel multiplication
per iteration ($M_k = KU_k$).
        \item \textit{Drawback:} Implicitly tracking variables via recurrence relations accumulates floating-point drift over time, forcing the algorithm to periodically ``refresh'' its state.
    \end{itemize}
\end{itemize}

\vspace{0.5em}
\noindent\textbf{2.4. Complexity \& Performance Summary}
\vspace{0.5em}

\begin{center}
\begin{tabular}{lcc}
\hline\hline
\textbf{Metric} & \textbf{Approach 1 (Pre-Aggregated)} & \textbf{Approach 2 (Inverse-Free)} \rule{0pt}{3ex} \rule[-1.5ex]{0pt}{0pt} \\
\hline
\textbf{Setup Cost} & $\mathcal{O}(qdr + qr^2 + \mathbf{n^2r^2 + nr^3})$ & $\mathcal{O}(qdr + \mathbf{n^2r})$ \textit{(Faster Setup)} \rule{0pt}{3ex} \\
\textbf{Iteration Cost} & $\mathcal{O}(n^2r + \mathbf{nr^2})$ \textit{(Independent of $q$)} & $\mathcal{O}(n^2r + \mathbf{qr})$ \textit{(Slower Iterations)} \\
\textbf{Dense $K$ Mults.} & \textbf{2} per iteration & \textbf{1} per iteration \\
\textbf{Preconditioner} & Targets CP rank collinearity & Targets Kernel spectral decay \rule[-1.5ex]{0pt}{0pt} \\
\hline\hline
\end{tabular}
\end{center}
\vspace{0.5em}

\noindent\textbf{2.5. Final Verdict}
\vspace{0.2em}

\noindent\textbf{Approach 1 (Pre-Aggregated)} is practically superior for large-scale, real-world
datasets. By shifting the computational burden to the setup phase (computing $E^{(i)}$), it
achieves an iteration cost that is entirely independent of the dataset size $q$. If your dataset
has millions of observations, Approach 2 will bottleneck hard. Furthermore, its Block-Jacobi preconditioner is a much more robust choice for CP tensor models, meaning it will likely require
far fewer total iterations to converge.

\noindent\textbf{Approach 2 (Inverse-Free)} is mathematically elegant and better suited for lightweight
setups or huge ranks. Its ``inverse-free'' tracking is a beautiful piece of numerical linear algebra that bounds the kernel's condition number without requiring any $\mathcal{O}(n^3)$ matrix
factorizations or explicit inversions. It is highly advantageous if memory is critically constrained,
setup time must be strictly minimized, or the CP rank $r$ is so large that Approach 1's $\mathcal{O}(nr^3)$
Cholesky factorizations become computationally prohibitive.

\vspace{1em}
\noindent\textbf{\large 3. Approach 1: Pre-Aggregated Geometry \& Block-Jacobi Preconditioner}
\vspace{0.5em}

\noindent\textbf{3.1. Pre-computation and Data Setup}
\vspace{0.2em}

\noindent To strictly avoid $\mathcal{O}(M)$ operations, we completely abandon the explicit formulation of $Z$. The $m$-th observation only requires its corresponding row in the Khatri-Rao product, $z^{(m)} = (Z_{j_m,:})^T \in \mathbb{R}^r$. This vector is computed cleanly as the Hadamard product of the respective rows of the $d-1$ fixed factor matrices in $\mathcal{O}(dr)$ time. We precompute these $q$ vectors in $\mathcal{O}(qdr)$ time, bypassing the ambient dimension $M$.

We capture the unaligned data geometry by defining $n$ independent symmetric matrices $E^{(i)} \in \mathbb{R}^{r\times r}$ corresponding to the $n$ slices of mode $k$. Let $\Omega_i = \{m \in \Omega \mid i_m = i\}$ be the set of observations falling in the $i$-th slice. We construct:

\begin{equation}
E^{(i)} = \sum_{m \in \Omega_i} z^{(m)} (z^{(m)})^T, \quad \text{for } i = 1 \dots n. \label{eq:2}
\end{equation}

The RHS matrix $C \in \mathbb{R}^{n\times r}$, such that $\operatorname{vec}(C) = c$, is computed completely sparsely. Since the tensor unfolding $T$ has missing entries set to zero, $B_{i,:} = \sum_{j=1}^M T_{i,j} Z_{j,:} = \sum_{m \in \Omega_i} y_m (z^{(m)})^T$. We compute $B$ in $\mathcal{O}(qr)$ time and evaluate $C = KB$ in $\mathcal{O}(n^2r)$ time.

\vspace{0.5em}
\noindent\textbf{3.2. Exact Matrix-Vector Product (MVP) avoiding $\mathcal{O}(N)$}
\vspace{0.2em}

\noindent During PCG, we must map an intermediate matrix $W \in \mathbb{R}^{n\times r}$ to a residual mapping $V \in \mathbb{R}^{n\times r}$ such that $\operatorname{vec}(V) = \mathcal{H} \operatorname{vec}(W)$.

\textbf{Theorem 1 (Implicit Operator Action).} \textit{For any $W \in \mathbb{R}^{n\times r}$, the mapping $V \in \mathbb{R}^{n\times r}$ such that $\operatorname{vec}(V) = \mathcal{H} \operatorname{vec}(W)$ evaluates exactly as:}
\begin{equation}
V = KY + \lambda U, \label{eq:3}
\end{equation}
\textit{where $U = KW \in \mathbb{R}^{n\times r}$ and the $i$-th row of $Y \in \mathbb{R}^{n\times r}$ is uniquely determined by $Y_{i,:} = U_{i,:} E^{(i)}$. This requires strictly $\mathcal{O}(n^2r + nr^2)$ time, completely independent of $q, M$, and $N$.}

\textit{Proof.} By the Kronecker-vector identity $\operatorname{vec}(AXB) = (B^T \otimes A) \operatorname{vec}(X)$, applying the first term evaluates as $(Z \otimes K) \operatorname{vec}(W) = \operatorname{vec}(KWZ^T) = \operatorname{vec}(UZ^T)$. The operator $SS^T \in \mathbb{R}^{N\times N}$ is exactly a sparse diagonal projection mask spanning the unaligned data entries. When applied to $\operatorname{vec}(UZ^T)$, it zeros out all unobserved tensor entries. Let the unvectorized masked matrix be $F \in \mathbb{R}^{n\times M}$. Its elements are identically zero everywhere except at the $q$ observed multi-indices, where they evaluate to:
\begin{equation*}
F_{i_m, j_m} = (UZ^T)_{i_m, j_m} = U_{i_m, :} Z_{j_m, :}^T = U_{i_m, :} z^{(m)}.
\end{equation*}

Next, applying the left operator $(Z \otimes K)^T = (Z^T \otimes K)$ yields $\operatorname{vec}(KFZ)$. Let $Y = FZ \in \mathbb{R}^{n\times r}$. Because $F$ is sparse, the $i$-th row of $Y$ expands strictly over the non-zeros in the $i$-th slice:
\begin{equation*}
Y_{i,:} = \sum_{j=1}^M F_{i,j} Z_{j,:} = \sum_{m \in \Omega_i} F_{i_m, j_m} Z_{j_m, :} = \sum_{m \in \Omega_i} \left( U_{i,:} z^{(m)} \right) (z^{(m)})^T = U_{i,:} \sum_{m \in \Omega_i} z^{(m)} (z^{(m)})^T = U_{i,:} E^{(i)}.
\end{equation*}

Finally, multiplying by $K$ completes the main term yielding $\operatorname{vec}(KY)$. The regularization term simplifies seamlessly: $\lambda(I_r \otimes K) \operatorname{vec}(W) = \operatorname{vec}(\lambda U)$. Summing the components validates $V = KY + \lambda U$. \hfill $\square$

\vspace{0.5em}
\noindent\textbf{3.3. Preconditioner Design: Row-wise Block-Jacobi}
\vspace{0.2em}

\noindent\textbf{Theorem 2 (Exact Block-Diagonal Operator).} \textit{Let $M^{(i)} \in \mathbb{R}^{r\times r}$ denote the exact $i$-th diagonal block of the Hessian operator $\mathcal{H}$ acting linearly on the $i$-th row of $W$ (represented as a column vector). It evaluates exactly as:}
\begin{equation}
M^{(i)} = \sum_{l=1}^n K_{i,l}^2 E^{(l)} + \lambda K_{i,i} I_r. \label{eq:4}
\end{equation}

Since $E^{(l)} \succeq 0$, $K_{i,l}^2 \ge 0$, and RKHS kernels satisfy $K_{i,i} \ge 0$, the resultant blocks are positive semi-definite, and symmetric positive definite for any $\lambda > 0$, allowing for computation of the Cholesky-like factorization $M^{(i)} = L_i L_i^T$. The sum $\sum_l K_{i,l}^2 E^{(l)}$ can be rapidly evaluated in $\mathcal{O}(n^2r^2)$ operations via flattened dense matrix multiplication.

\vspace{1em}
\noindent\textbf{Algorithm 1: PCG for Unaligned RKHS Mode-$k$ Subproblem (Approach 1)}
\vspace{0.5em}
\hrule
\vspace{0.5em}
\begin{algorithmic}[1]
\STATE \textbf{Initialize Setup:} Calculate $Z_\Omega \in \mathbb{R}^{q\times r}$, $\{E^{(i)}\}_{i=1}^n$, and $C = KB \in \mathbb{R}^{n\times r}$.
\STATE \textbf{Preconditioner:} Construct $M^{(i)}$ via flat multiplication, and compute decomposition $L_iL_i^T = M^{(i)}$ for $i = 1 \dots n$.
\STATE $W_0 = \mathbf{0}_{n\times r}$, $R_0 = C$, and $Z_0 \in \mathbb{R}^{n\times r}$.
\FOR{$i = 1 \dots n$}
\STATE Solve $L_iL_i^T(Z_0)_{i,:}^T = (R_0)_{i,:}^T$ \COMMENT{$\triangleright$ Apply Preconditioner Block-wise}
\ENDFOR
\STATE $P_0 = Z_0$
\FOR{$k = 0, 1, 2, \dots$ \textbf{until} $\|R_k\|_F < \tau$}
\STATE $U = KP_k$
\STATE Compute $Y_{i,:} = U_{i,:}E^{(i)}$ for all $i = 1 \dots n$
\STATE $V_k = KY + \lambda U$ \COMMENT{$\triangleright$ Exact fast MVP defined in Theorem 1}
\STATE $\alpha_k = \text{Tr}(R_k^T Z_k) / \text{Tr}(P_k^T V_k)$
\STATE $W_{k+1} = W_k + \alpha_k P_k$
\STATE $R_{k+1} = R_k - \alpha_k V_k$
\FOR{$i = 1 \dots n$}
\STATE Solve $L_iL_i^T(Z_{k+1})_{i,:}^T = (R_{k+1})_{i,:}^T$
\ENDFOR
\STATE $\beta_k = \text{Tr}(R_{k+1}^T Z_{k+1}) / \text{Tr}(R_k^T Z_k)$
\STATE $P_{k+1} = Z_{k+1} + \beta_k P_k$
\ENDFOR
\RETURN $W_{k+1}$
\end{algorithmic}
\vspace{0.5em}
\hrule
\vspace{1em}

\vspace{1em}
\noindent\textbf{\large 4. Approach 2: On-the-fly Sparse Accumulation \& Kernel Preconditioner}
\vspace{0.5em}

\noindent\textbf{4.1. Algebraic Reformulation of the Linear System}
\vspace{0.2em}

\noindent The linear system to solve for $W \in \mathbb{R}^{n\times r}$ is $\mathcal{H}\operatorname{vec}(W) = b$. Using the mixed-product property of Kronecker products, $(A \otimes B)(C \otimes D) = (AC) \otimes (BD)$, we can factor $Z \otimes K = (Z \otimes I_n)(I_r \otimes K)$. Let $P = I_r \otimes K$. Substituting this factorization and setting $H_{\text{red}} = (Z \otimes I_n)^T SS^T (Z \otimes I_n)$, we seamlessly rewrite the exact system as:

\begin{equation}
(P H_{\text{red}} P + \lambda P) \operatorname{vec}(W) = P \operatorname{vec}(B) \label{eq:5}
\end{equation}

\vspace{0.5em}
\noindent\textbf{4.2. Method and Choice of Preconditioner}
\vspace{0.2em}

\noindent We select $P = I_r \otimes K$ as our preconditioner.
Without preconditioning, the system matrix can be ill-conditioned, especially for certain RKHS kernels with decaying eigenvalues. By preconditioning with $P$, the effective operator is normalized to $P^{1/2} H_{\text{red}} P^{1/2} + \lambda I_{nr}$. The condition number becomes mathematically bounded regardless of the kernel spectrum:

\begin{equation}
\kappa \le \frac{\lambda + \lambda_{\max}(H_{\text{red}}) \lambda_{\max}(K)}{\lambda} = 1 + \frac{\lambda_{\max}(H_{\text{red}}) \lambda_{\max}(K)}{\lambda} \label{eq:6}
\end{equation}

\textbf{The ``Inverse-Free'' Tracking Technique:} Standard PCG evaluates $z_k = P^{-1} r_k$ at every iteration, seemingly requiring an intractable $\mathcal{O}(n^3)$ inversion of $K$. However, note that the right-hand side $b = P \operatorname{vec}(B)$ is in the range of $P$, and the operator $P(H_{\text{red}} P + \lambda I)$ always maps vectors into the range of $P$. By induction, the residual $r_k$ intrinsically factors as $r_k = P \hat{r}_k$. Therefore, the preconditioned residual is exactly $z_k = \hat{r}_k$. By analytically updating $\hat{r}_k$ instead of $r_k$, $P^{-1}$ perfectly algebraically cancels out.

\vspace{0.5em}
\noindent\textbf{4.3. Matrix-Free Matrix-Vector Products (Sparse Operations)}
\vspace{0.2em}

\noindent We evaluate $U = \operatorname{unvec}(H_{\text{red}} \operatorname{vec}(V))$ in $\mathcal{O}(qr)$ time by bypassing $N$ and $M$.

\begin{enumerate}
    \item \textbf{Apply $(Z \otimes I_n)$:} Conceptually expands the model to the full unaligned grid.
    \item \textbf{Apply Sparse Selection $S^T$:} Extract the $q$ observed entries. For observation $m$, the predicted value is $(VZ^T)_{i_m, j_m} = \langle V_{i_m, :}, z_m \rangle$. We compute these $q$ scalars $e_m$.
    \item \textbf{Apply Adjoint $S$ and $(Z^T \otimes I_n)$:} The output $U$ sparsely accumulates the entries where the mode-$k$ index matches $i$: $U_{i, :} = \sum_{m: i_m = i} e_m z_m$.
\end{enumerate}

\vspace{1em}
\noindent\textbf{Algorithm 2: Complete Inverse-Free PCG Algorithm (Approach 2)}
\vspace{0.5em}
\hrule
\vspace{0.5em}
\begin{algorithmic}[1]
\STATE \textbf{Setup Phase:}
\STATE Pre-extract $Z_\Omega \in \mathbb{R}^{q\times r}$ using Khatri-Rao rules.
\STATE Initialize RHS state with $q$ known tensor values $y_m$: $B \in \mathbb{R}^{n\times r}$ where $B_{i,:} = \sum_{m: i_m = i} y_m z_m$.
\STATE Initialize: $W_0 = \mathbf{0}$, $Z_0 = B$, $D_0 = B$. Compute $\tilde{Z}_0 = KB$, and let $V_0 = \tilde{Z}_0$.
\FOR{$k = 0, 1, \dots$ \textbf{until} convergence}
\STATE \textbf{Sparse MVP ($\mathcal{O}(qr)$):}
\STATE \quad $e_m = \langle (V_k)_{i_m, :}, z_m \rangle$ for $m = 1 \dots q$
\STATE \quad Initialize $U_k = \mathbf{0}_{n\times r}$. Accumulate $(U_k)_{i_m, :} \mathrel{+}= e_m z_m$
\STATE \textbf{Compute Operator Action ($\mathcal{O}(nr)$):}
\STATE \quad $Q_k = U_k + \lambda D_k$
\STATE \textbf{Step Size ($\mathcal{O}(nr)$):}
\STATE \quad $\alpha_k = \frac{\text{Tr}(Z_k^T \tilde{Z}_k)}{\text{Tr}(V_k^T Q_k)}$
\STATE \textbf{Update Estimates ($\mathcal{O}(nr)$):}
\STATE \quad $W_{k+1} = W_k + \alpha_k D_k$
\STATE \quad $Z_{k+1} = Z_k - \alpha_k Q_k$
\STATE \textbf{Update Conjugation Maps ($\mathcal{O}(n^2r)$):}
\STATE \quad Compute $M_k = KU_k$ \COMMENT{$\triangleright$ Only dense $n \times n$ multiplication required}
\STATE \quad $\tilde{Z}_{k+1} = \tilde{Z}_k - \alpha_k(M_k + \lambda V_k)$
\STATE \textbf{Update Directions ($\mathcal{O}(nr)$):}
\STATE \quad $\beta_k = \frac{\text{Tr}(Z_{k+1}^T \tilde{Z}_{k+1})}{\text{Tr}(Z_k^T \tilde{Z}_k)}$
\STATE \quad $D_{k+1} = Z_{k+1} + \beta_k D_k$
\STATE \quad $V_{k+1} = \tilde{Z}_{k+1} + \beta_k V_k$ \COMMENT{$\triangleright$ Tracks $KD_{k+1}$ rigorously}
\ENDFOR
\STATE \textit{(Note: Every 50 iterations, explicitly refresh $\tilde{Z}_k = KZ_k$ and $V_k = KD_k$ to clear drift).}
\end{algorithmic}
\vspace{0.5em}
\hrule
\end{tcolorbox}

\end{document}